\newcommand{\dd}{\mathop{}\!\mathrm{d}}
\newcommand{\EE}{\mathbb{E}}
\newcommand{\bC}{\mathbf{C}}
\newcommand{\bH}{\mathbf{H}}
\newcommand{\cL}{\mathcal{L}}
\newcommand{\Var}{\operatorname{Var}}
\newcommand{\tr}{\operatorname{tr}}
\newtheorem{thm}{Theorem}
\newtheorem{lem}[thm]{Lemma}
\newtheorem{ass}{Assumption}
\newcommand{\cmark}{\textcolor{green!70!black}{\checkmark}}
\newcommand{\xmark}{\textcolor{red}{\ding{55}}}
\definecolor{oblrbaseline}{RGB}{31,95,155}
\definecolor{oblrlr}{RGB}{210,105,30}
\newcommand{\baselinecolor}[1]{\textcolor{oblrbaseline}{#1}}
\newcommand{\lrcolor}[1]{\textcolor{oblrlr}{#1}}
\newtcolorbox{takeaway}[1]{
  colback=blue!5!white,
  colframe=black,
  colbacktitle=blue!80!white,
  coltitle=white,
  fonttitle=\bfseries,
  title=Takeaway #1
}
\title{Variance-Aware Baselines and Adaptive Learning Rates for Reinforcement Learning with Verifiable Rewards}
\author{
Zixun Huang$^{1,*}$ \qquad
Jiayi Sheng$^{2,*}$ \qquad
Zeyu Zheng$^{3,\dagger}$\\[0.6em]
\normalsize $^1$University of Pennsylvania\\
\normalsize $^2$Carnegie Mellon University\\
\normalsize $^3$University of California, Berkeley\\
% \normalsize \href{mailto:alexpku@berkeley.edu}{\texttt{zixunh@wharton.upenn.edu}} \quad
% \href{mailto:jiayi_sheng@berkeley.edu}{\texttt{jiayi\_sheng@berkeley.edu}} \quad
% \href{mailto:zyzheng@berkeley.edu}{\texttt{zyzheng@berkeley.edu}}\\[0.35em]
\small $^*$Equal contribution. $^\dagger$Corresponding author.
}
\date{}
\begin{document}
\maketitle

\begin{abstract}
Reinforcement learning with verifiable rewards (RLVR) has emerged as an effective paradigm for post-training large language models, yet the design of its baselines and learning-rate schedules remains largely heuristic. This limits our understanding of the statistical properties of policy-gradient estimators and their interaction with optimization dynamics. In this work, we develop a theoretical framework for variance-aware baseline design and adaptive learning-rate selection in RLVR. Under a KL-regularized policy-optimization setting, we establish the unbiasedness of the resulting gradient estimator, derive exact variance expressions including the KL cross-covariance, and obtain an optimization-loss upper bound that enables principled reasoning about learning dynamics. Building on these results, we prove convergence guarantees and derive an adaptive learning-rate schedule governed by the signal-to-noise ratio (SNR) of the policy gradient. We further show that the variance-optimal baseline is a gradient-weighted estimator of the KL-regularized reward, providing a principled alternative to commonly used reward-based baselines. 
These results lead to two complementary improvements: a variance-optimal baseline and an SNR-adaptive learning-rate rule. Experiments on Qwen3-4B-Base show that each component independently improves policy-optimization performance. The learning-rate rule can also be naturally integrated with existing policy optimization methods to yield further gains, while combining it with the variance-optimal baseline gives the full Optimal Baseline and Learning-Rate Policy Optimization (OBLR-PO) method and achieves the strongest overall performance.
\end{abstract}

\section{Introduction}
Reinforcement learning with verifiable rewards (RLVR) is widely used to improve language-model reasoning~\cite{shao2024deepseekmathpushinglimitsmathematical,deepseekai2025deepseekr1incentivizingreasoningcapability,sheng2025solving,kimiteam2026kimik3,deepseekai2026deepseekv4}. Its core policy-gradient update depends critically on two design choices: the baseline used to construct the gradient estimator and the learning rate used to scale each update. Existing algorithms instantiate these choices in different ways. For example, Proximal Policy Optimization (PPO)~\cite{schulman2018highdimensionalcontinuouscontrolusing} uses a learned value function, Group Relative Policy Optimization (GRPO)~\cite{deepseekai2025deepseekr1incentivizingreasoningcapability,shao2024deepseekmathpushinglimitsmathematical} uses group-normalized rewards, and REINFORCE with leave-one-out baselines (RLOO)~\cite{ahmadian2024basicsrevisitingreinforcestyle} constructs a baseline from the rewards of other sampled responses. Despite their empirical success, these designs are largely selected heuristically, leaving unclear how the baseline and learning rate should be chosen from a theoretical principle.

% The two choices address distinct aspects of policy optimization. A baseline controls the variance of the policy-gradient estimator without changing its expectation, whereas the learning rate determines how strongly the optimizer should act on that estimator. 
Existing baseline rules are typically functions of rewards or learned values~\cite{zhu2023finetuninglanguagemodelsadvantageinduced,ahmadian2024basicsrevisitingreinforcestyle,walder2025passkpolicyoptimizationsolving},  without considering how strongly each sample affects the policy gradient.
Because this influence can vary substantially across samples, a variance-minimizing baseline should also account for their gradient magnitudes~\cite{Greensmith2001VarianceRT}.
Meanwhile, commonly used fixed or predefined learning-rate schedules do not account for the changing reliability of the stochastic gradient during RLVR training. 
This motivates two related questions: what baseline minimizes the variance of the policy-gradient estimator, and how should the learning rate adapt to the estimator's signal and noise?

In this work, we develop a theoretical framework to answer these questions under a KL-regularized policy-optimization objective. We first establish the unbiasedness of the gradient estimator and derive its exact variance, including the cross-covariance induced by the KL term. We then obtain an upper bound on the optimization loss and use it to derive convergence guarantees. Optimizing this bound yields an adaptive learning-rate rule governed by the signal-to-noise ratio (SNR) of the policy gradient. Independently, minimizing the estimator variance yields a gradient-weighted baseline for the KL-regularized reward, rather than a conventional unweighted reward baseline.

These results give rise to two modular improvements: a variance-optimal baseline and an SNR-adaptive learning-rate rule. Each can be used independently and improves performance in our experiments. In particular, the learning-rate rule is agnostic to the choice of baseline and can be naturally added to existing policy optimization methods, including RLOO, ReMax, and GRPO, to provide further gains. The two components are also complementary: applying the adaptive learning rate together with the variance-optimal baseline produces the full \emph{Optimal Baseline and Learning-Rate Policy Optimization (OBLR-PO)} method and further improves performance. We validate the individual components and their combination by post-training Qwen3-4B-Base on mathematical reasoning tasks.

Our main contributions are as follows:
\begin{itemize}
    \item We present a theoretical framework for KL-regularized policy optimization that establishes unbiasedness, derives exact gradient-variance expressions, and provides an optimization-loss upper bound and convergence guarantees (Sections~\ref{sec:3}, \ref{sec:4.1}--\ref{sec:4.4}).
    \item We derive an SNR-adaptive learning-rate rule by optimizing the loss upper bound. This module can be applied to different policy-gradient estimators and baselines, and empirically improves RLOO, ReMax, and GRPO (Sections~\ref{sec:4.3}, \ref{sec:5.learning_rate}, and~\ref{sec:6.2}).
    \item We characterize the variance-optimal baseline as a gradient-weighted estimator of the KL-regularized reward and show that it improves gradient SNR and downstream accuracy relative to standard baselines (Sections~\ref{sec:4.4}, \ref{sec:5.baseline}, and~\ref{sec:6.3}).
    \item We combine the two complementary modules into OBLR-PO and show that their combination provides further gains over either component alone and existing policy optimization methods (Sections~\ref{sec:5} and~\ref{sec:6.4}).
\end{itemize}

\section{Related Work}
\label{sec:related_work}
\paragraph{Algorithmic Variants of Policy Optimization.}
Policy optimization is central to shaping the reasoning capabilities of large language models and has become a foundation of reinforcement learning from human feedback (RLHF)~\cite{christiano2023deepreinforcementlearninghuman,stiennon2022learningsummarizehumanfeedback,ouyang2022traininglanguagemodelsfollow,bai2022traininghelpfulharmlessassistant,deepseekai2025deepseekr1incentivizingreasoningcapability}. Within this framework, supervised fine-tuning (SFT)~\cite{ouyang2022traininglanguagemodelsfollow,wang-etal-2023-self-instruct} provides initial alignment through instruction imitation, PPO~\cite{schulman2018highdimensionalcontinuouscontrolusing} uses critic-based reinforcement learning on preference-model rewards, and Direct Preference Optimization (DPO)~\cite{rafailov2024directpreferenceoptimizationlanguage} directly matches preferences with a contrastive objective. To avoid the complexity of critic-based methods, GRPO~\cite{deepseekai2025deepseekr1incentivizingreasoningcapability,shao2024deepseekmathpushinglimitsmathematical} replaces the learned value function with a group-reward baseline, while ReMax~\cite{li2024remaxsimpleeffectiveefficient}, RLOO~\cite{ahmadian2024basicsrevisitingreinforcestyle}, and Reinforce++~\cite{hu2025reinforceefficientrlhfalgorithm} adopt maximum-reward, leave-one-out, or variance-reduced baselines, respectively. These designs are reward- or value-based; in contrast, we systematically analyze baseline choice and derive a gradient-weighted baseline for the KL-regularized reward by directly minimizing policy-gradient variance. The resulting practical estimator accounts for sample gradient magnitude and the cross-covariance induced by KL regularization.

\paragraph{Theoretical Foundations of Policy Optimization.}
Recent studies analyze policy-optimization dynamics through loss-function upper bounds and convergence guarantees~\cite{li2025stayingsweetspotresponsive,pang2025theorypracticegrpotrajectorycorrected,brantley2025acceleratingrlllmreasoning,yao2025optimizingchainofthoughtreasonersgradient}. Prior work studies the effect of a single-step update and the corresponding optimal update vector~\cite{li2025stayingsweetspotresponsive}, develops stochastic no-regret oracle frameworks with regret bounds and connections to online learning~\cite{brantley2025acceleratingrlllmreasoning}, and applies classical gradient-descent theory to establish convergence under smoothness and convexity assumptions~\cite{pang2025theorypracticegrpotrajectorycorrected,yao2025optimizingchainofthoughtreasonersgradient}. Related analyses provide convergence results for GRPO and related algorithms~\cite{pang2025theorypracticegrpotrajectorycorrected} and guaranteed loss decrease under smoothness~\cite{yao2025optimizingchainofthoughtreasonersgradient}. Complementing this line of work, we derive an exact gradient-variance characterization that retains the KL-induced cross-covariance and develop a unified optimization bound yielding both a variance-optimal baseline and an SNR-adaptive learning rate. Unlike predefined schedules, our learning-rate rule adapts to gradient reliability and can augment multiple policy-gradient estimators. Together, the two components control the variance and magnitude of policy updates and combine to form OBLR-PO.

\section{Problem Setup}
\label{sec:3}
\subsection{Objective Function}
In this section, we formally define our problem setup. Our target is to learn an optimal policy $\pi_\theta$ that maximizes the expected reward, which serves as a measure of accuracy or performance on the given task. Formally, we aim to solve:

\vspace{-1em}
\begin{align}
\label{eq:setup}
\max_\theta\quad J(\theta)
={}&\EE_{q\sim D,o\sim\pi_\theta(\cdot|q)}[F(q,o)]-\beta\EE_{q\sim D}\!\left[D_{\mathrm{KL}}\!\left(
\pi_\theta(\cdot|q)\,\|\,\pi_{\mathrm{ref}}(\cdot|q)\right)\right],
\end{align}

% \jiayi{ A note on simplifying $\nabla_\theta J(\theta)$:
% \begin{align}
% \nabla_\theta J(\theta)
% &= \nabla_\theta \; \mathbb{E}_{q\sim D,\:o\sim \pi_\theta(\cdot| q)}\bigl[F(q,o)\bigr] \\[6pt]
% &= \mathbb{E}_{q\sim D}\bigl[\nabla_\theta\mathbb{E}_{o\sim \pi_\theta(\cdot| q)} F(q,o)\bigr] \\[6pt]
% &= \mathbb{E}_{q\sim D}\!\Bigl[\int \nabla_\theta \pi_\theta(o| q)\,F(q,o)\,\mathrm{d}o\Bigr] \\[6pt]
% &= \mathbb{E}_{q\sim D}\!\Bigl[\int \pi_\theta(o| q)\,\nabla_\theta \log \pi_\theta(o| q)\,F(q,o)\,\mathrm{d}o\Bigr] \\[6pt]
% &= \mathbb{E}_{q\sim D, \: o\sim \pi_\theta(\cdot| q)}\bigl[\nabla_\theta \log \pi_\theta(o| q)\;F(q,o)\bigr].
% \end{align}
% }

In online optimization algorithms, data is collected using an old policy \( \pi_{\theta_{\text{old}}} \), and importance sampling is employed to correct for the discrepancy between the old policy and the target policy \( \pi_\theta \) by scaling the advantage estimates with the importance sampling ratio $\frac{\pi_\theta(o | q)}{\pi_{\theta_{\text{old}}}(o | q)}$:
\vspace{-.5em}
\begin{align}
\label{eq:setup_surrogate}
\max_\theta\quad
&\EE_{q\sim D,o\sim\pi_{\theta_{\operatorname{old}}}(\cdot|q)}
\left[\frac{\pi_\theta(o|q)}{\pi_{\theta_{\operatorname{old}}}(o|q)}F(q,o)\right]-\beta\EE_{q\sim D}\!\left[D_{\mathrm{KL}}\!\left(
\pi_\theta(\cdot|q)\,\|\,\pi_{\mathrm{ref}}(\cdot|q)\right)\right].
\end{align}

Here $D$ denotes the distribution over queries $q$, $\pi_\theta$ is a behavior policy which generates outputs $o$ conditioned on $q$, $\pi_{\mathrm{ref}}$ is a fixed reference policy, and $\beta\geq0$ controls the KL regularization strength. $F(q,o)$ is an approximate reward which takes the query $q$ and output $o$ and returns a scalar value representing the estimated quality of the output. 

In our setup, the reward $F(q,o)$ is assumed to be available, either from accuracy supervision or a reward model.

\subsection{Related RL Algorithms}
\label{sec:3.2}
A variety of algorithms have been developed to improve the reasoning ability of large language models. 
In this section, we present several representative methods, each formulated through a distinct surrogate objective $J_{\operatorname{PO}}(\theta)$.

\paragraph{\underline{PPO}}
The Proximal Policy Optimization (PPO)~\cite{schulman2018highdimensionalcontinuouscontrolusing} objective is formulated as
\vspace{-.3em}
\begin{align}
    \label{eq:PPO}
    J_{\operatorname{PPO}}(\theta) 
    = &\EE_{q\sim D, \, o \sim \pi_{\theta_{\operatorname{old}}}(\cdot|q)} \left[ 
        \frac{\pi_\theta(o|q)}{\pi_{\theta_{\operatorname{old}}}(o|q)} 
        \, \hat{A}^{\mathrm{PPO}}(q,o) 
    \right]-\beta\EE_{q\sim D}\!\left[D_{\mathrm{KL}}\!\left(
\pi_\theta(\cdot|q)\,\|\,\pi_{\mathrm{ref}}(\cdot|q)\right)\right],
\end{align}
\vspace{-.3em}
where the advantage estimator $\hat{A}^{\operatorname{PPO}}$ is computed using 
Generalized Advantage Estimation (GAE) \cite{schulman2018highdimensionalcontinuouscontrolusing}, 
a widely adopted variance-reduction technique that stabilizes policy-gradient training.

\paragraph{\underline{GRPO}} Group Relative Policy Optimization (GRPO)~\cite{shao2024deepseekmathpushinglimitsmathematical} optimizes policies by comparing rewards among a group of sampled outputs, without relying on an explicit value or reward model. 
Given a query $q$, we draw $G$ outputs $\{o_i\}_{i=1}^G$ from the old policy $\pi_{\theta_{\operatorname{old}}}$ with associated rewards $r_i = F(q,o_i)$. 
The objective is
\vspace{-.3em}
\begin{align}
    J_{\operatorname{GRPO}}(\theta) 
    = &\EE_{q\sim D,\,\{o_i\}\sim \pi_{\theta_{\operatorname{old}}}} \left[ 
        \frac{1}{G}\sum_{i=1}^G 
        \frac{\pi_\theta(o_i|q)}{\pi_{\theta_{\operatorname{old}}}(o_i|q)} 
        \, \hat{A}^{\mathrm{GRPO}}(q,o_i) 
    \right]-\beta\EE_{q\sim D}\!\left[D_{\mathrm{KL}}\!\left(
\pi_\theta(\cdot|q)\,\|\,\pi_{\mathrm{ref}}(\cdot|q)\right)\right],
\end{align}
\vspace{-.3em}
where the group-relative advantage is normalized as
\vspace{-.3em}
\begin{equation}
    \label{eq:GRPO}
    \hat{A}^{\mathrm{GRPO}}(q,o_i) 
    = \frac{r_i - \frac{1}{G}\sum_{j=1}^G r_j}{\sqrt{\frac{1}{G}\sum_{j=1}^G\left(r_j - \frac{1}{G}\sum_{k=1}^G r_k\right)^2}}.
\end{equation}
\vspace{-.3em}

\paragraph{\underline{ReMax}}  
The ReMax~\cite{li2024remaxsimpleeffectiveefficient} method draws inspiration from the REINFORCE with Baseline approach, where we modify the gradient estimation by incorporating a subtractive baseline value. The objective is:
\vspace{-.3em}
\begin{align}
    J_{\operatorname{ReMax}}(\theta) 
    =  \EE_{q_i \sim D, \, o^i_{1:T} \sim \pi_{\theta}(\cdot | q_i)}  &\left[ 
        \frac{1}{N} \sum_{i=1}^{N} \sum_{t=1}^{T} 
        \frac{\pi_\theta(o_{t}^i|q_i, o_{1:t-1}^i)}{\pi_{\theta_{\operatorname{old}}}(o_{t}^i|q_i, o_{1:t-1}^i)} 
        \hat{A}^{\mathrm{ReMax}}(q_i, o^i_{1:T}) 
    \right]\nonumber\\&-\beta\EE_{q\sim D}\!\left[D_{\mathrm{KL}}\!\left(
\pi_\theta(\cdot|q)\,\|\,\pi_{\mathrm{ref}}(\cdot|q)\right)\right],
\end{align}
where the action \(o^i_t \sim \pi_{\theta}(\cdot | q_i, o^i_{1:t-1})\), and \(b_{\theta}(q_i)\) is the baseline value. The choice for the baseline is:
\vspace{-.3em}
\[
b_{\theta}(q_i) = r(q_i, \bar{o}^i_{1:T}), \quad \bar{o}^i_t \in \arg\max \pi_{\theta}(\cdot | q_i, \bar{o}^i_{1:t-1}),
\]

This baseline value is obtained by greedily sampling the response and calculating the associated reward value.

The advantage function is defined as:
\vspace{-.3em}
\[
\hat{A}^{\mathrm{ReMax}}(q_i, o^i_{1:T}) = r(q_i, o^i_{1:T}) - b_{\theta}(q_i).
\]
\vspace{-.5em}

\paragraph{\underline{RLOO}} 
REINFORCE Leave-One-Out (RLOO)~\cite{ahmadian2024basicsrevisitingreinforcestyle, Kool2019Buy4R} extends the REINFORCE estimator to the multi-sample setting by employing a leave-one-out baseline. 
Given a query $q$, we draw $G$ outputs $\{o_i\}_{i=1}^G$ from the old policy $\pi_{\theta_{\operatorname{old}}}$ with rewards $r_i = F(q,o_i)$. 
The objective is
\vspace{-.3em}

\begin{align}
    J_{\operatorname{RLOO}}(\theta) 
    = &\EE_{q\sim D,\,\{o_i\}\sim \pi_{\theta_{\operatorname{old}}}}\left[ 
        \frac{1}{G}\sum_{i=1}^G 
        \frac{\pi_\theta(o_i|q)}{\pi_{\theta_{\operatorname{old}}}(o_i|q)} 
        \, \hat{A}^{\mathrm{RLOO}}(q,o_i) 
    \right]-\beta\EE_{q\sim D}\!\left[D_{\mathrm{KL}}\!\left(
\pi_\theta(\cdot|q)\,\|\,\pi_{\mathrm{ref}}(\cdot|q)\right)\right],
\end{align}
\vspace{-.3em}
where the leave-one-out advantage is
\vspace{-.3em}
\begin{equation}
    \label{eq:RLOO}
    \hat{A}^{\mathrm{RLOO}}(q,o_i) 
    = r_i - \frac{1}{G-1}\sum_{j\neq i} r_j.
\end{equation}
\vspace{-.3em}

\paragraph{\underline{General Form}}
The KL-regularized surrogate objective $J_{\operatorname{PO}}(\theta)$ can be expressed as
\vspace{-.3em}
\begin{align}
J_{\operatorname{PO}}(\theta)
=&\EE_{q\sim D,\,\{o_i\}\sim \pi_{\theta_{\operatorname{old}}}}\left[\frac{1}{G_t}\sum_{i=1}^{G_t}
\frac{\pi_\theta(o_i|q)}{\pi_{\theta_{\operatorname{old}}}(o_i|q)}
\hat A^{\mathrm{PO}}(q,o_i)\right]-\beta\EE_{q\sim D}\!\left[D_{\mathrm{KL}}\!\left(
\pi_\theta(\cdot|q)\,\|\,\pi_{\mathrm{ref}}(\cdot|q)\right)\right].
\label{eq:general_beta}
\end{align}
Here $G_t$ denotes the group size at iteration $t$. Setting $\beta=0$ recovers the original formulation.

\subsection{Online Gradient Ascent}
We consider online gradient ascent as our training algorithm. As in the original analysis, we study the beginning of each policy update, where $\pi_{\theta_{\mathrm{old}}}=\pi_\theta$. For the forward KL,
\vspace{-.2em}
\begin{align}
&\nabla_\theta D_{\mathrm{KL}}\!\left(
\pi_\theta(\cdot|q)\,\|\,\pi_{\mathrm{ref}}(\cdot|q)\right)=\EE_{o\sim\pi_\theta(\cdot|q)}\!\left[
\nabla_\theta\log\pi_\theta(o|q)
\log\frac{\pi_\theta(o|q)}{\pi_{\mathrm{ref}}(o|q)}\right].
\end{align}
\vspace{-.3em}
Therefore, the gradient estimator is
\vspace{-.2em}
\begin{align}
\widehat{\nabla_\theta J(\theta)}
&={}\frac{1}{N_tG_t}\sum_{j=1}^{N_t}\sum_{i=1}^{G_t}
\nabla_\theta\log\pi_\theta(o_{i,j}|q_j)\left(\hat{A}^{\mathrm{PO}}(q_j, o_{i,j})
-\beta\log\frac{\pi_\theta(o_{i,j}|q_j)}
{\pi_{\mathrm{ref}}(o_{i,j}|q_j)}\right).
\label{eq:beta_gradient_estimator}
\end{align}
\vspace{-.6em}
The policy is then updated by
\begin{equation}
\theta_{t+1}=\theta_t+\eta_t\widehat{\nabla_\theta J(\theta_t)}.
\end{equation}
\vspace{-1em}

\subsection{Assumptions}
\label{sec:3.4}
For the simplicity of theoretical analysis, we require the assumption as below.
% \begin{ass}
% \label{ass:1}
% Let $(o_i)_{i=1}^G$ be a group of outputs sampled from the policy $\pi_\theta(\cdot| q)$ for a given query $q$. We assume that the group-relative advantage is computed as
% \[
% \hat{A}^{\mathrm{PO}}(q, o_i) = F(q, o_i) - b_\theta(q),
% \]
% where $b_\theta(q)$ denotes an approximate mean value of $F(q, o)$ under the policy $\pi_\theta(\cdot| q)$, and is independent of the sampled outputs $o$.
% \end{ass}
\begin{ass}
\label{ass:1}
Let \( o \sim \pi_\theta(\cdot|q) \) be an output sampled from the policy for a given query \( q \). We assume that the advantage is computed as
\vspace{-.3em}
\begin{equation}
  \hat{A}^{\mathrm{PO}}(q, o) = F(q, o) - b_\theta(q),  
\end{equation}
where \( b_\theta(q) \) denotes a reference value. For theoretical analysis, we treat \( b_\theta(q) \) as independent of the sampled output \( o \), though it may, in general, be related to other outputs of the query.
\end{ass}

    % \alex{Is it reasonable?
    % Actually, we only require $\hat{A}^{\operatorname{PO}}(q, o_i)$ is independent with $\hat{A}^{\operatorname{PO}}(q, o_j)$.
    % We need to relate the gradient to the objective function.
    % \begin{align}
    %     \nabla_{\theta} [J_{\operatorname{PO}}(\theta)] &= \EE_{q\sim D, o\sim\pi_{\theta}(\cdot|q)} \left[
    %     \nabla_\theta \log \pi_\theta(o|q) \hat{A}^{\operatorname{PO}}(q,o)
    % \right] \\
    % &= \nabla_\theta [J(\theta)] - J(\theta) \EE_{q\sim D, o\sim\pi_{\theta}(\cdot|q)} \left[
    %     \nabla_\theta \log \pi_\theta(o|q) 
    % \right].
    % \end{align}
    % It seems like we need to maximize $J(\theta)$ while minimize $\EE_{q\sim D, o\sim\pi_{\theta}(\cdot|q)} \left[
    % \log \pi_\theta(o|q) 
    % \right]$. The penalty is adapted to $J(\theta)$. So strange, do you have an explanation?}
    % \alex{Maybe can alleviate reward hacking?}
Under Assumption~\ref{ass:1}, the baseline remains a zero-mean control variate:
\begin{equation}
\EE_{o\sim\pi_\theta(\cdot|q)}\!\left[
 b_\theta(q)\nabla_\theta\log\pi_\theta(o|q)\right]=0.
\end{equation}
Consequently, the estimator in Equation~\eqref{eq:beta_gradient_estimator} has expectation $\nabla_\theta J(\theta)$. Thus, we retain the original notation $J(\theta)$ and $\widehat{\nabla_\theta J(\theta)}$ throughout the analysis.

\begin{table}[ht]
\centering
\caption{Satisfaction of Assumption~\ref{ass:1} across algorithms.}
\begin{tabular}{lc}
\toprule
Algorithm & Assumption~\ref{ass:1} satisfied \\
\midrule
PPO~\cite{schulman2018highdimensionalcontinuouscontrolusing}            & \xmark \\
GRPO~\cite{shao2024deepseekmathpushinglimitsmathematical}           & \xmark \\
ReMax~\cite{li2024remaxsimpleeffectiveefficient}
& \cmark \\
RLOO~\cite{ahmadian2024basicsrevisitingreinforcestyle}            & \cmark \\
\midrule
OBLR-PO (Ours)  & \cmark \\
\bottomrule
\end{tabular}
\end{table}

% \alex{Can we analyze the RLOO algorithm?}

% \alex{
% We can choose different $V$ and they share the same learning rate schedule and group size schedule. $V = \frac{1}{G'} \sum_{i=1}^{G'}F(q,o_i')$, $V = J(\theta)$, from other papers, can you investigate the algorithm papers? We can cover a large size of empirical works!
% }
% \jiayi{
% Sure! I will try to find what did other paper do for the V. But I will try to implement a baseline so we can start our experiments as soon as possible first. 

% }

% \alex{
% I'd like to discuss some data selection method ... Maybe next project.
% }

\begin{ass}
\label{ass:2}
The KL-regularized objective $J(\theta)$ is $L$-smooth, i.e., for all $\theta,\theta'$,
\begin{equation}
\|\nabla J(\theta')-\nabla J(\theta)\|_2
\le L\|\theta'-\theta\|_2.
\end{equation}
\end{ass}

\begin{ass}
    \label{ass:4}
    We assume that there exists a uniform upper bound for the squared norm of the gradient of the log-likelihood, i.e.,
    \begin{equation}
        \int\sup_{\theta } \| \nabla_\theta [\log \pi_\theta (o|q)] \|_2^2 \, \mathrm{d}o \dd q \le M.
    \end{equation}
\end{ass}

\begin{ass}
\label{ass:3}
There exists a constant $B$ such that
\begin{equation}
\left|F(q,o)-\beta\log\frac{\pi_\theta(o|q)}
{\pi_{\mathrm{ref}}(o|q)}\right|\le B,
\qquad |b_\theta(q)|\le B.
\end{equation}
\end{ass}

\section{Main result}
\label{sec:theory}
\vspace{-0.5em}

\subsection{Bias and Variance Analysis}
\label{sec:4.1}
\vspace{-0.5em}
In this section, we prove that the KL-regularized gradient estimator is unbiased and has a tractable variance.

\begin{thm}[Unbiasedness]
The estimator in Equation~\eqref{eq:beta_gradient_estimator} is unbiased, i.e.,
\begin{equation}
\EE\!\left[\widehat{\nabla_\theta J(\theta)}\right]=\nabla_\theta J(\theta).
\end{equation}
\end{thm}
We retain the original decomposition
\begin{equation}
\widehat{\nabla_\theta J(\theta)}=\nabla_\theta J(\theta)+\xi(\theta),
\end{equation}
where $\EE[\xi(\theta)]=0$. The single-sample covariance is now
\vspace{-.3em}
\begin{align}
\bH(\theta):=\Var\Bigg[&\nabla_\theta\log\pi_\theta(o|q)
\Bigg(F(q,o)-b_\theta(q)-\beta\log\frac{\pi_\theta(o|q)}{\pi_{\mathrm{ref}}(o|q)}\Bigg)\Bigg],
\end{align}
and $\bC(\theta)$ denotes the covariance between two distinct samples of the same expression under a common query.

\begin{thm}[Variance Expression]
\label{thm:4.2}
The covariance matrix of $\xi(\theta)$ is
\begin{equation}
\Var[\xi(\theta)]
=\frac{1}{N_tG_t}\bH(\theta)
+\frac{G_t-1}{N_tG_t}\bC(\theta).
\end{equation}
\end{thm}
The proofs can be found in Appendix~\ref{app:A.1}.

\subsection{Deriving an Upper Bound for the Loss Function}
\label{sec:4.2}
In this section, we denote the loss function as \( \cL(\theta) = J(\theta^*) - J(\theta) \), where \( \theta^* \) is the optimal parameter. We then derive an upper bound for \( J(\theta^*) - J(\theta_T) \) in terms of the learning rate \( \{\eta_t\}_{t=0}^{T-1} \). It is clear that
\begin{equation}
    \min_{\theta} \cL(\theta) = \cL(\theta^*) = 0,
\end{equation}
indicating that the loss function attains its minimum value when \( \theta = \theta^* \), where $\cL(\theta)$ vanishes.

Our main result is stated in the following theorem, 
whose proof  can be found in Appendix \ref{app:A.2}.

\begin{thm}[Upper Bound]
    \label{thm:upperbound}
    Under Assumptions~\ref{ass:1}--\ref{ass:3} and Assumption~\ref{ass:4},
    \begin{align}
    \EE[\cL(\theta_T)]
    \le{}&\EE[\cL(\theta_0)]
    -\sum_{t=0}^{T-1}\eta_t\EE\|\nabla\cL(\theta_t)\|_2^2+\frac{L}{2}\sum_{t=0}^{T-1}\eta_t^2
    \left(\EE\|\nabla\cL(\theta_t)\|_2^2
    +\frac{1}{N_t}\tr(\bH(\theta_t))\right).
    \end{align}
\end{thm}

\subsection{Optimal Learning Rate Schedule}
\label{sec:4.3}
In this section, we aim to identify the optimal learning rate schedule \( \{\eta_t\}_{t=0}^{T-1} \) that minimizes the final expected loss \( \mathbb{E}[\cL(\theta_T)] \), formally stated as
\begin{equation}
    \min_{\{\eta_t\}_{t=0}^{T-1}} \ \EE\left[\cL(\theta_T)\right].
\end{equation}
However, since the exact evaluation of \( \mathbb{E}[\cL(\theta_T)] \) is generally intractable, we instead consider minimizing the upper bound provided in Theorem~\ref{thm:upperbound}. This leads to the following result.

% To measure the information content of a stochastic gradient, we introduce the concept of the signal-to-noise ratio:
% \begin{equation}
%     \operatorname{SNR}(\theta) = \frac{ \|\EE [\nabla_\theta J(\theta)]\|^2}{\EE[\|\nabla_\theta J(\theta) - \EE [\nabla_\theta J(\theta)]\|^2]}.
% \end{equation}

\begin{thm}[Optimal Learning Rate Schedule]
    \label{thm:opt_lr}
    The optimal learning rate schedule is given by
    \begin{align}
        \eta_t &= \frac{1}{L}\cdot \frac{\EE\|\nabla_\theta \cL(\theta_t)\|_2^2}{\EE\|\nabla_\theta \cL(\theta_t)\|_2^2 + \frac{1}{N_t} \tr(\bH(\theta_t))} = \frac{1}{L} \cdot \frac{N_t \operatorname{SNR}(\theta_t)}{1+N_t \operatorname{SNR}(\theta_t)}.
    \end{align}
    Here, we introduce the concept of the signal-to-noise ratio to measure the information content of a stochastic gradient:
    \begin{equation}
        \operatorname{SNR}(\theta)
        =\frac{\EE\|\nabla_\theta \cL(\theta)\|_2^2}
        {\tr(\bH(\theta))}.
    \end{equation}
\end{thm}

This theorem shows that the optimal learning rate is governed by the
signal-to-noise ratio (SNR) of the gradient.
\begin{takeaway}{1}
Richer information in $\theta_t$ allows us to trust updates more and use a larger learning rate.
\end{takeaway}

% \alex{Can we prove that the learning rate should decay. Can we analyze the order of learning rate?}

By selecting the optimal learning-rate schedule, we derive the following upper bound:

\begin{thm}
\label{thm:5}
    Under the optimal learning rate schedule~\ref{thm:opt_lr}, we have
    \begin{align}
        &\quad\quad\mathbb{E} [\cL(\theta_T)] \le \mathbb{E} [\cL(\theta_0)]\nonumber \\
        &- \sum_{t=0}^{T-1} \frac{\EE\|\nabla_\theta \cL(\theta_t)\|_2^4}{2L\left( \EE\|\nabla_\theta \cL(\theta_t)\|_2^2 + \frac{1}{N_t} \text{tr}(\bH(\theta_t)) \right)} .
    \end{align}
\end{thm}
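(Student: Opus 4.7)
The plan is simply to substitute the optimal learning-rate schedule from Theorem~\ref{thm:opt_lr} into the upper bound from Theorem~\ref{thm:upperbound} and collapse the resulting quadratic expression termwise. Since the upper bound is separable in the $\eta_t$'s (each $\eta_t$ appears in exactly one summand, and the summand is quadratic in $\eta_t$), the optimization decouples over $t$, and a single one-step algebraic simplification does all the work.

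To make the bookkeeping clean, I would introduce the shorthand $a_t := \|\nabla_\theta \mathcal{L}(\theta_t)\|_2^2$, $b_t := \tfrac{1}{N_t G_t}\tr(\bH(\theta_t))$, and $C := BL+B^2 M$, so that the optimal schedule reads $\eta_t^\star = \tfrac{a_t}{C(a_t+b_t)}$ and the per-step contribution to the bound in Theorem~\ref{thm:upperbound} becomes $g_t(\eta_t) := -\eta_t a_t + \tfrac{C}{2}\eta_t^2 (a_t+b_t)$. This is a convex quadratic in $\eta_t$ minimized exactly at $\eta_t = \eta_t^\star$. Plugging in $\eta_t^\star$ I expect the following collapse:
\begin{align}
g_t(\eta_t^\star)
&= -\frac{a_t^2}{C(a_t+b_t)} + \frac{C}{2}\cdot\frac{a_t^2}{C^2(a_t+b_t)^2}(a_t+b_t) \nonumber \\
&= -\frac{a_t^2}{C(a_t+b_t)} + \frac{a_t^2}{2C(a_t+b_t)} \nonumber \\
&= -\frac{a_t^2}{2C(a_t+b_t)}.
\end{align}
Substituting $a_t,b_t,C$ back and summing over $t=0,\ldots,T-1$ yields the stated expression exactly.

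The only genuinely subtle point, and the place I would argue most carefully, is that $\eta_t^\star$ is itself a random quantity since it depends on $\theta_t$ through $a_t$ and $b_t$. The bound in Theorem~\ref{thm:upperbound} is, however, derived from a one-step descent inequality that holds pointwise in $\theta_t$ before taking expectations (this is the standard $L$-smoothness telescoping structure), so we are free to pick $\eta_t$ as any $\mathcal{F}_t$-measurable function of $\theta_t$ without breaking the inequality. Once this is noted, the termwise minimization of $g_t(\eta_t)$ can be carried out inside the expectation. Beyond this measurability remark, there is no real obstacle: the theorem is essentially a corollary of completing the square in the quadratic $g_t$.
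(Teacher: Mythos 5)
Your proposal is correct and is essentially identical to the paper's own proof: both substitute the optimal $\eta_t^\star$ into the per-step quadratic term from Theorem~\ref{thm:upperbound} and simplify algebraically. Your additional remark on the $\mathcal{F}_t$-measurability of $\eta_t^\star$ is a legitimate subtlety that the paper glosses over, but it does not change the approach.
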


\begin{thm}[Convergence Analysis]
    Under the optimal learning rate schedule~\ref{thm:opt_lr}, we have
    \begin{equation}
        \frac{1}{T}\sum_{t=0}^{T-1} \EE\|\nabla_\theta \cL(\theta_t)\|_2^2 =
        O\left(\frac{1}{\sqrt{T}}\right),
    \end{equation}
    where the big-$O$ notation hides constants and other problem-dependent parameters independent of $T$.
\end{thm}
The proofs can be found in Appendix~\ref{app:A.3}.

\subsection{Optimal Baseline Design}
\label{sec:4.4}
In this section, we minimize $\tr(\bH(\theta))$, the total gradient variance appearing in Theorem~\ref{thm:upperbound}.
\begin{thm}[KL-Regularized Optimal Baseline]
\label{thm:optimal_beta_baseline}
The variance-optimal scalar baseline is
\begin{align}
b_\theta(q)
=\frac{\EE_{o\sim\pi_\theta(\cdot|q)}
\|\nabla_\theta\log\pi_\theta(o|q)\|_2^2
\left(F(q,o)-\beta\log
\frac{\pi_\theta(o|q)}{\pi_{\mathrm{ref}}(o|q)}\right)}{\EE_{o\sim\pi_\theta(\cdot|q)}
[\|\nabla_\theta\log\pi_\theta(o|q)\|_2^2]}.
\label{eq:forward_beta_baseline}
\end{align}
\end{thm}
Thus, the original formula is preserved, with the reward replaced by its KL-regularized counterpart. When $\beta=0$, Theorem~\ref{thm:optimal_beta_baseline} exactly recovers the original gradient-weighted baseline.

\begin{takeaway}{2}
Under KL regularization, the baseline should be a gradient-weighted average of the KL-regularized reward, rather than a simple reward average.
\end{takeaway}

The proof can be found in Appendix~\ref{app:A.4}.

\paragraph{Variance Reduction}
The optimal baseline minimizes the total variance $\bH(\theta)$, including the variance and cross-covariance induced by the KL-gradient estimator. This yields the smallest variance term in the loss upper bound and supports more stable policy updates.

\begin{algorithm}[t]
\caption{Optimal Baseline and Learning-Rate Policy Optimization (OBLR-PO). \baselinecolor{Blue: variance-optimal baseline}; \lrcolor{orange: SNR-adaptive learning rate}; black: shared operations.}
\label{alg:OBLRPO}
\small
\begin{algorithmic}[1]
\REQUIRE Initial policy $\pi_{\theta_0}$, reference policy $\pi_{\mathrm{ref}}$, reward function $F$, KL coefficient $\beta$, base rate $\eta_0$, and rate band $[\eta_{\min},\eta_{\max}]$
\FOR{$t=0,\dots,T-1$}
    \STATE Set $\theta_{\mathrm{old}}\leftarrow\theta_t$ and sample $N_t$ queries
    \STATE For each $q$, sample $G_t$ responses $o_i\sim\pi_{\theta_{\mathrm{old}}}(\cdot|q)$
    \FOR{each sampled response $o_i$}
        \STATE \baselinecolor{Compute $R_i=F(q,o_i)-\beta\log\frac{\pi_{\theta_t}(o_i|q)}{\pi_{\mathrm{ref}}(o_i|q)}$}
        \STATE \baselinecolor{Compute $s_i=\|\nabla_\theta\log\pi_{\theta_t}(o_i|q)\|_2^2$}
    \ENDFOR
    \FOR{each sampled response $o_i$}
        \STATE \baselinecolor{Set $\hat b_i=\frac{\sum_{j\ne i}s_jR_j}{\sum_{j\ne i}s_j}$ and $\hat A_i=R_i-\hat b_i$}
    \ENDFOR
    \STATE Construct importance-weighted gradient samples $\{g_{i,q}\}$ using $\hat A_i$
    \STATE Aggregate $\hat g_t=\frac{1}{N_tG_t}\sum_{q,i}g_{i,q}$
    \STATE \lrcolor{Estimate $\widehat{\operatorname{SNR}}(\theta_t)$ from the gradient samples}
    \STATE \lrcolor{Set $\hat\eta_t=\operatorname{clamp}\!\left(\eta_0\frac{N_t\widehat{\operatorname{SNR}}(\theta_t)}{1+N_t\widehat{\operatorname{SNR}}(\theta_t)},\eta_{\min},\eta_{\max}\right)$}
    \STATE Update $\theta_{t+1}\leftarrow\theta_t+\hat\eta_t\hat g_t$
\ENDFOR
\end{algorithmic}
\end{algorithm}

\section{Methodology}
\label{sec:5}
Our theory motivates two complementary components: a score-norm-weighted baseline that reduces gradient variance and an SNR-adaptive learning rate that scales updates by gradient reliability. Either component can be used independently; together they form \emph{Optimal Baseline and Learning-Rate Policy Optimization (OBLR-PO)}.

\subsection{Variance-Optimal Baseline}
\label{sec:5.baseline}
At iteration $t$, the old policy $\pi_{\theta_{\mathrm{old}}}$ samples $G_t$ responses per query. For each response $o_i$, we define the KL-regularized reward
\begin{equation}
R_i=F(q,o_i)-\beta\log
\frac{\pi_\theta(o_i|q)}{\pi_{\mathrm{ref}}(o_i|q)}.
\label{eq:empirical_kl_reward}
\end{equation}
Let $s_i=\|\nabla_\theta\log\pi_\theta(o_i|q)\|_2^2$ be the squared score norm. Following Theorem~\ref{thm:optimal_beta_baseline}, we use the score-norm-weighted leave-one-out baseline
\begin{equation}
\hat b_\theta(q,o_i)
=\frac{\sum_{j\ne i}s_jR_j}{\sum_{j\ne i}s_j},
\label{eq:empirical_beta_baseline}
\end{equation}
and advantage
\begin{equation}
\hat A(q,o_i)=R_i-\hat b_\theta(q,o_i).
\label{eq:empirical_optimal_advantage}
\end{equation}
The leave-one-out construction avoids coupling a sample to its own baseline, while score-norm weighting emphasizes responses that contribute more to gradient variance. When all score norms are equal, it reduces to the standard leave-one-out reward mean. The baseline-only variant uses these advantages with a fixed learning rate (Algorithm~\ref{alg:optimal_baseline} in Appendix~\ref{app:method_algorithms}).

\subsection{SNR-Adaptive Learning Rate}
\label{sec:5.learning_rate}
Given gradient samples from any policy-gradient estimator, we estimate $\widehat{\operatorname{SNR}}(\theta_t)$ and use the practical counterpart of Theorem~\ref{thm:opt_lr},
\begin{equation}
\tilde\eta_t=\eta_0\frac{N_t\widehat{\operatorname{SNR}}(\theta_t)}
{1+N_t\widehat{\operatorname{SNR}}(\theta_t)},
\qquad
\hat\eta_t=\operatorname{clamp}(\tilde\eta_t,\eta_{\min},\eta_{\max}).
\label{eq:empirical_adaptive_lr}
\end{equation}
Here, $\eta_0$ absorbs the unknown smoothness factor, and clamping prevents extreme updates. The step size increases with SNR, assigning conservative updates to noisy gradients and approaching the base rate for reliable gradients. Because the rule is baseline-agnostic, it can augment estimators such as RLOO, ReMax, and GRPO. The learning-rate-only variant applies this rule with the chosen estimator's original baseline (Algorithm~\ref{alg:adaptive_lr} in Appendix~\ref{app:method_algorithms}).

\subsection{Full OBLR-PO Update}
\label{sec:5.oblrpo}
OBLR-PO first constructs KL-regularized, score-norm-weighted leave-one-out advantages, then uses the resulting gradient samples to estimate the SNR and update the policy. The baseline controls the variance of the update direction, while the adaptive rule controls its magnitude. Algorithm~\ref{alg:OBLRPO} summarizes the procedure: the \baselinecolor{blue steps} implement the baseline, the \lrcolor{orange steps} implement learning-rate adaptation, and the black steps are shared. Using only the blue steps with a fixed rate gives the baseline-only variant; using only the orange steps with a selected base-method baseline gives the learning-rate-only variant. Appendix~\ref{app:estimators} provides the importance-weighted gradient and micro-batch SNR estimators.

\section{Experiments}
\label{sec:6}

\begin{figure*}[t]
\centering
% Top row: GSM8K Pass@1 accuracy
\begin{subfigure}{0.32\textwidth}
    \includegraphics[width=\linewidth]{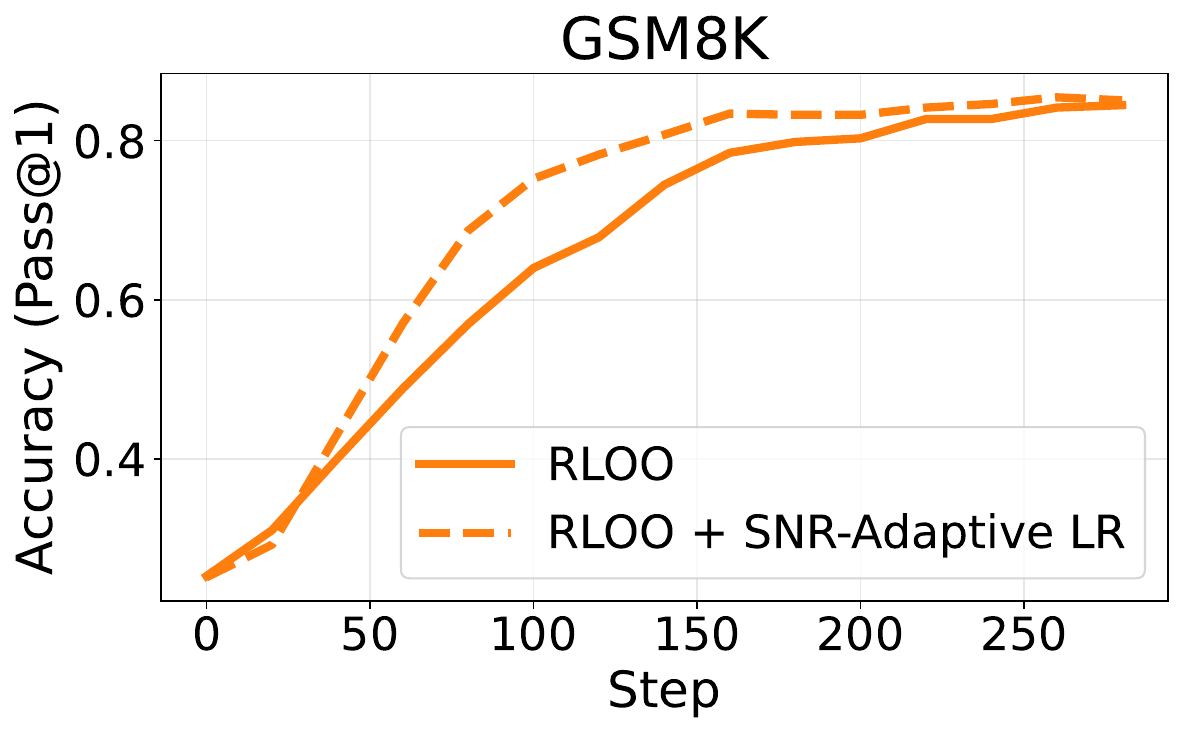}
    \caption{RLOO, accuracy}
\end{subfigure}
\hfill
\begin{subfigure}{0.32\textwidth}
    \includegraphics[width=\linewidth]{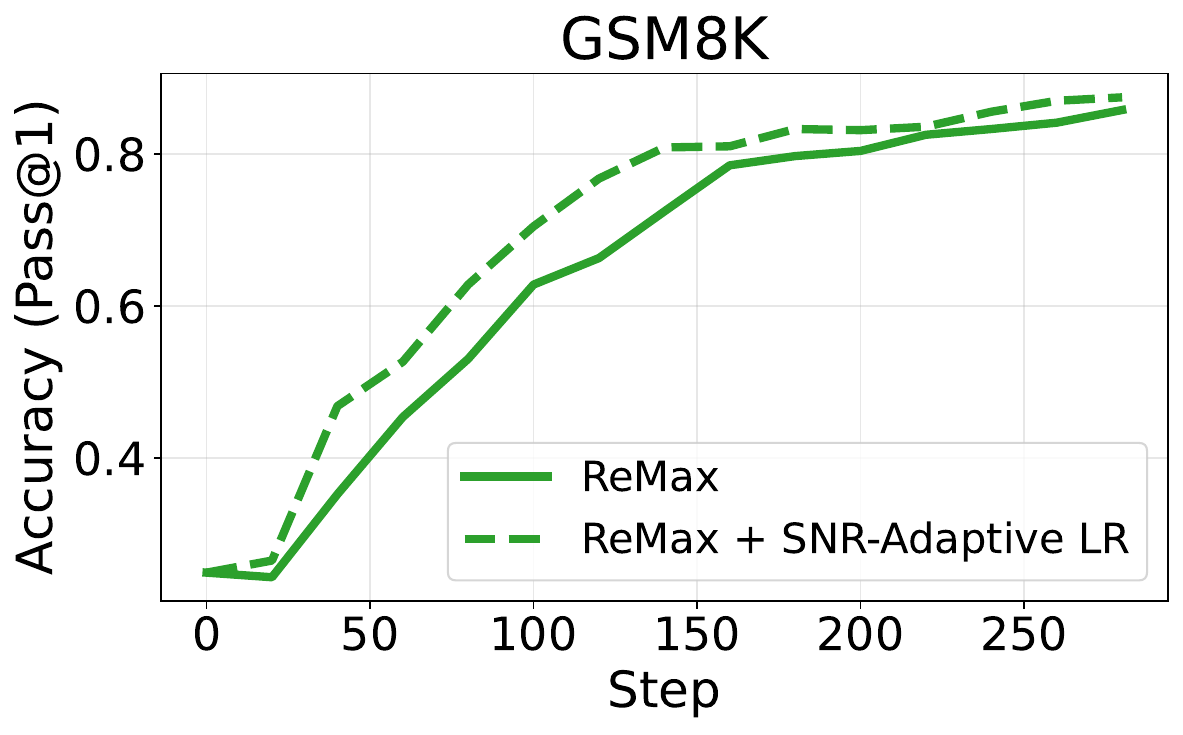}
    \caption{ReMax, accuracy}
\end{subfigure}
\hfill
\begin{subfigure}{0.32\textwidth}
    \includegraphics[width=\linewidth]{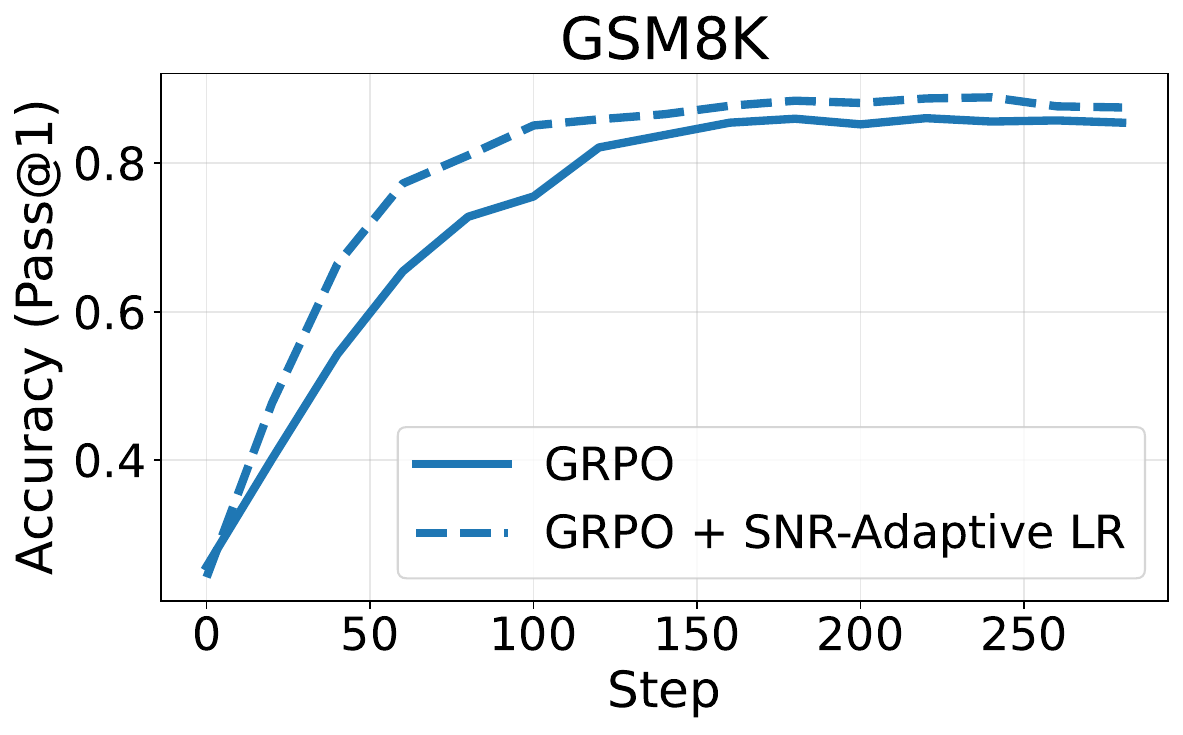}
    \caption{GRPO, accuracy}
\end{subfigure}

\vspace{0.4em}
% Bottom row: training reward
\begin{subfigure}{0.32\textwidth}
    \includegraphics[width=\linewidth]{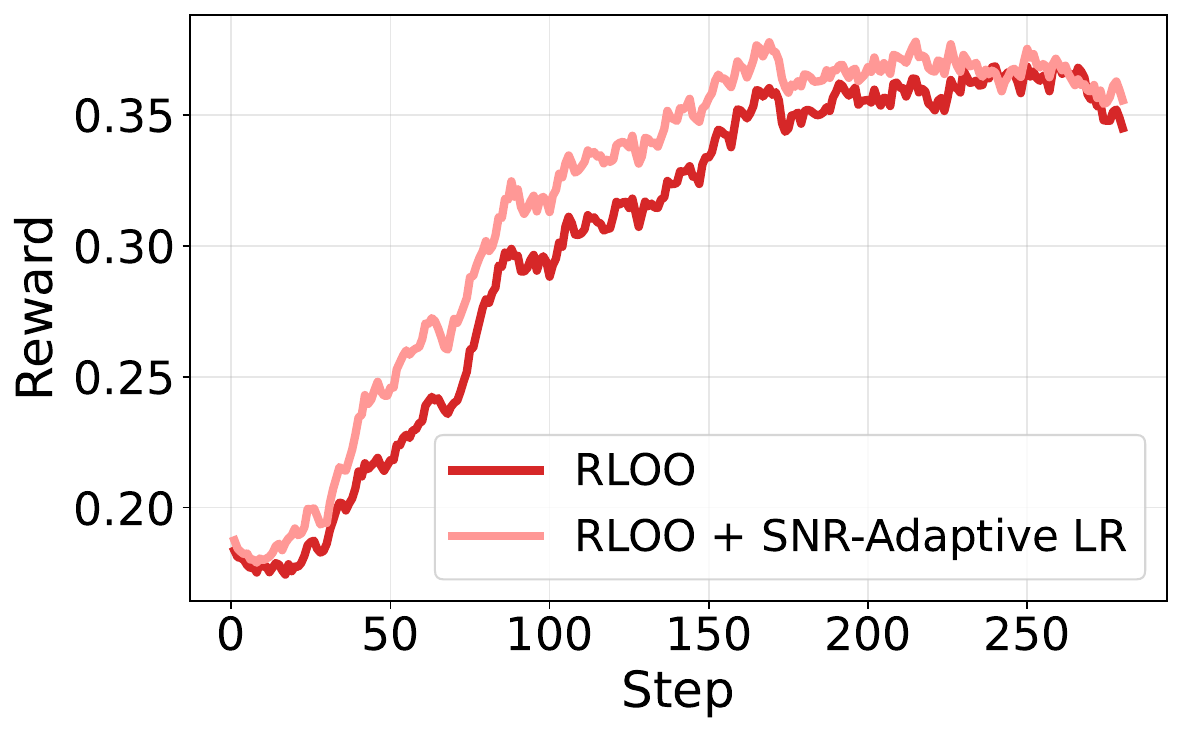}
    \caption{RLOO, reward}
\end{subfigure}
\hfill
\begin{subfigure}{0.32\textwidth}
    \includegraphics[width=\linewidth]{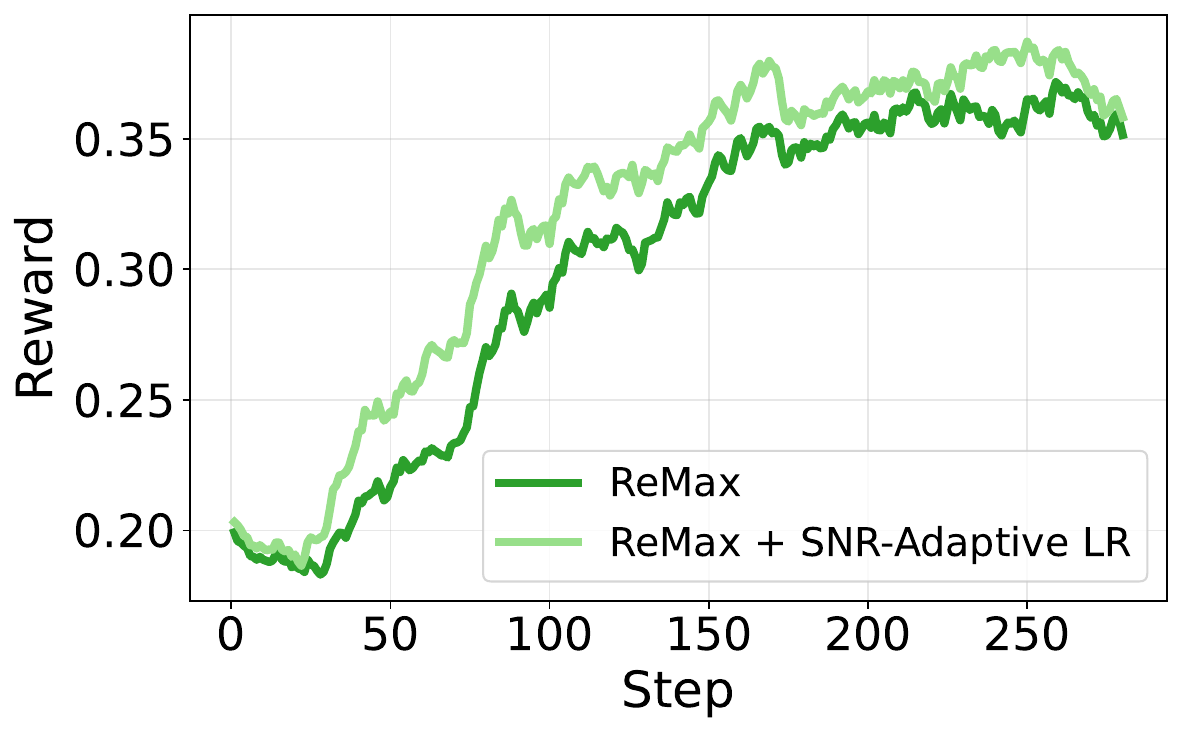}
    \caption{ReMax, reward}
\end{subfigure}
\hfill
\begin{subfigure}{0.32\textwidth}
    \includegraphics[width=\linewidth]{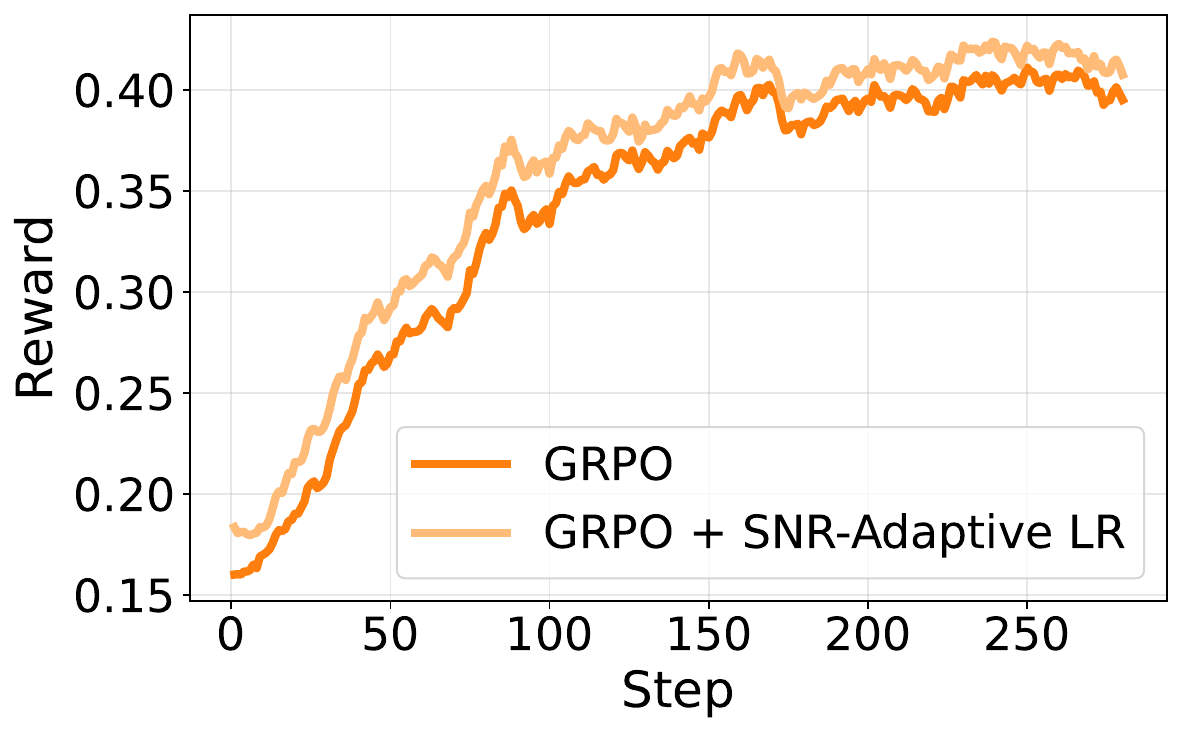}
    \caption{GRPO, reward}
\end{subfigure}
\caption{Effect of the SNR-Adaptive LR versus a fixed learning rate, for RLOO, ReMax, and GRPO. Top row: Pass@1 accuracy on GSM8K; bottom row: training reward.}
\label{fig:lr}
\end{figure*}

We empirically validate the two components of our framework---the SNR-adaptive learning rate (Algorithm~\ref{alg:adaptive_lr}) and the variance-optimal baseline (Algorithm~\ref{alg:optimal_baseline})---as well as their combination, the full OBLR-PO method. Our experiments are organized into three parts. We first show that the SNR-adaptive learning rate improves any base policy-gradient estimator (Section~\ref{sec:6.2}). We then show that the variance-optimal baseline reduces gradient variance and improves accuracy relative to standard baselines (Section~\ref{sec:6.3}). Finally, we show that the full OBLR-PO method outperforms existing policy optimization algorithms (Section~\ref{sec:6.4}).

\begin{table*}[t]
\centering
\caption{Final Pass@1 accuracy (\%, $\uparrow$) across four benchmarks, and the gradient noise-to-signal ratio $1/\widehat{\mathrm{SNR}}$ ($\times10^4$, $\downarrow$) of each estimator. \emph{Variance-Optimal Baseline} uses a fixed learning rate; \emph{OBLR-PO} adds the SNR-adaptive schedule. Best per column in \textbf{bold}, second best \underline{underlined}.}
\label{tab:main}
\resizebox{0.9\textwidth}{!}{%
\begin{tabular}{lccccc}
\toprule
 & \multicolumn{4}{c}{Accuracy (Pass@1, \%)\,$\uparrow$} & $1/\widehat{\mathrm{SNR}}$ ($\times10^4$)\,$\downarrow$ \\
\cmidrule(lr){2-5}\cmidrule(lr){6-6}
Method & AMC23 & GSM8K & MATH-500 & OlympiadBench & \\
\midrule
PPO~\cite{schulman2018highdimensionalcontinuouscontrolusing}    & $52.50$ & $81.05$ & $69.40$ & $24.11$ & $1.74$ \\
ReMax~\cite{li2024remaxsimpleeffectiveefficient}  & $45.00$ & $82.41$ & $70.80$ & $\underline{24.55}$ & $1.63$ \\
RLOO~\cite{ahmadian2024basicsrevisitingreinforcestyle}   & $47.50$ & $83.93$ & $70.60$ & $23.81$ & $1.67$ \\
GRPO~\cite{shao2024deepseekmathpushinglimitsmathematical}   & $52.50$ & $85.60$ & $70.20$ & $23.07$ & $\underline{1.43}$ \\
\midrule
Variance-Optimal Baseline (Ours) & $\underline{55.00}$ & $\underline{85.90}$ & $\underline{71.80}$ & $\mathbf{25.74}$ & $\mathbf{1.34}$ \\
OBLR-PO (Ours)          & $\mathbf{57.50}$ & $\mathbf{87.19}$ & $\mathbf{72.20}$ & $\underline{24.55}$ & -- \\
\bottomrule
\end{tabular}%
}
\end{table*}

\subsection{Experimental Setup}
\label{sec:6.1}
All experiments post-train \textbf{Qwen3-4B-Base} on the \textbf{DeepMath-103K}~\cite{he2025deepmath} dataset with verifiable rule-based rewards, optimized with SGD using verl~\cite{sheng2024hybridflow} framework. We evaluate on four mathematical reasoning benchmarks---\textbf{AMC23}~\cite{mathai_amc23}, \textbf{GSM8K}~\cite{cobbe2021trainingverifierssolvemath}, \textbf{MATH-500}~\cite{hendrycksmath2021, lightman2023lets}, and \textbf{OlympiadBench}~\cite{he2024olympiadbench}---reporting accuracy (Pass@1). All runs train for a maximum of $285$ steps on 4 A800 GPUs. The full hyperparameter configuration, including the SNR-adaptive learning-rate band, is provided in Appendix~\ref{app:setup}.

\subsection{Effectiveness of the SNR-Adaptive Learning Rate}
\label{sec:6.2}

We first isolate the SNR-adaptive learning rate (Algorithm~\ref{alg:adaptive_lr}), which can be applied on top of any policy-gradient estimator while leaving its baseline unchanged. For each base method, we compare a fixed learning rate against the SNR-adaptive learning rate, keeping all other settings identical.

Figure~\ref{fig:lr} reports representative Pass@1 accuracy curves on GSM8K (top row) together with the corresponding training-reward curves (bottom row), for RLOO, ReMax, and GRPO. Adding the SNR-Adaptive LR consistently matches or improves both accuracy and reward, confirming Theorem~\ref{thm:opt_lr}. The learning-rate rule thus behaves as a lightweight, drop-in improvement rather than a method-specific tweak, adding negligible wall-clock overhead (Appendix~\ref{app:experiments}). The same trend holds on the remaining benchmarks; the accuracy curves for MATH-500 and OlympiadBench are deferred to Appendix~\ref{app:experiments}.

We highlight that RLOO and ReMax satisfy Assumption~\ref{ass:1}, and are therefore directly covered by our theory. GRPO, by contrast, does \emph{not} satisfy Assumption~\ref{ass:1}, so the GRPO${+}$SNR-Adaptive LR curves are reported as an \emph{additional, beyond-theory} experiment. Notably, the SNR-adaptive learning rate still delivers clear gains for GRPO, suggesting that the mechanism is robust even outside the regime.

\begin{figure}[t]
\centering
\begin{subfigure}{0.275\linewidth}
    \includegraphics[width=\linewidth]{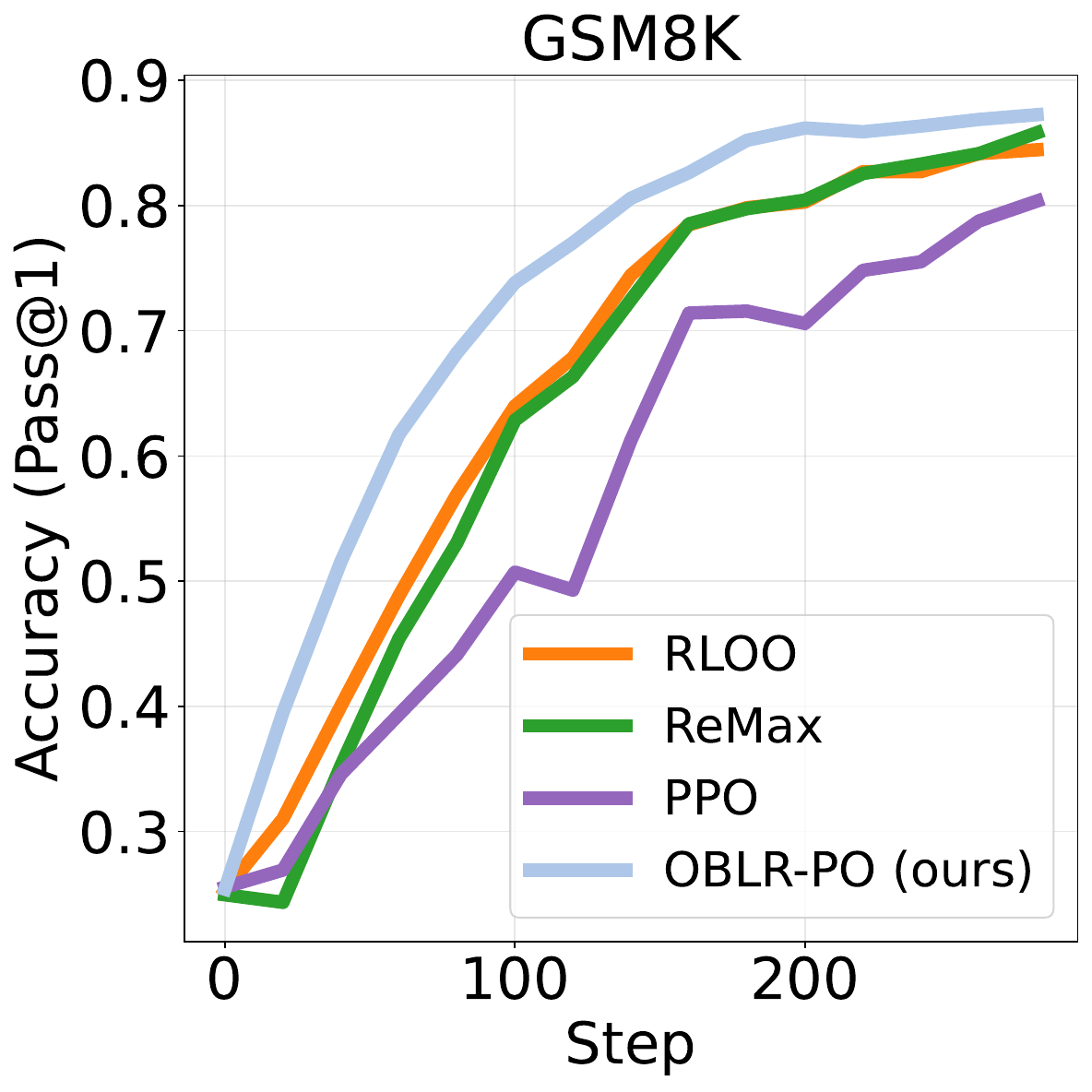}
    \caption{GSM8K accuracy}
\end{subfigure}
\hfill
\begin{subfigure}{0.275\linewidth}
    \includegraphics[width=\linewidth]{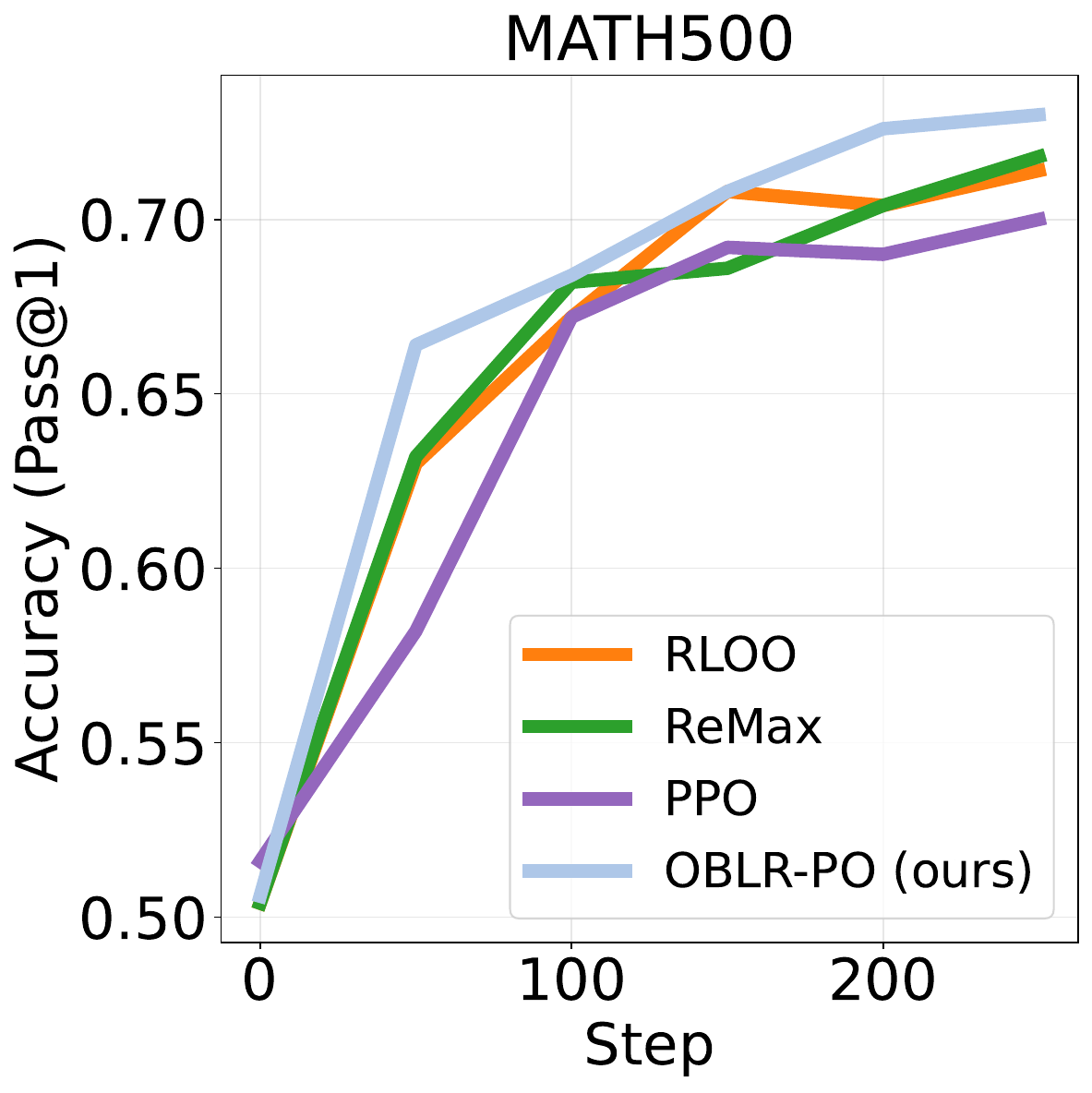}
    \caption{MATH500 accuracy}
\end{subfigure}
\hfill
\begin{subfigure}{0.42\linewidth}
    \includegraphics[width=\linewidth]{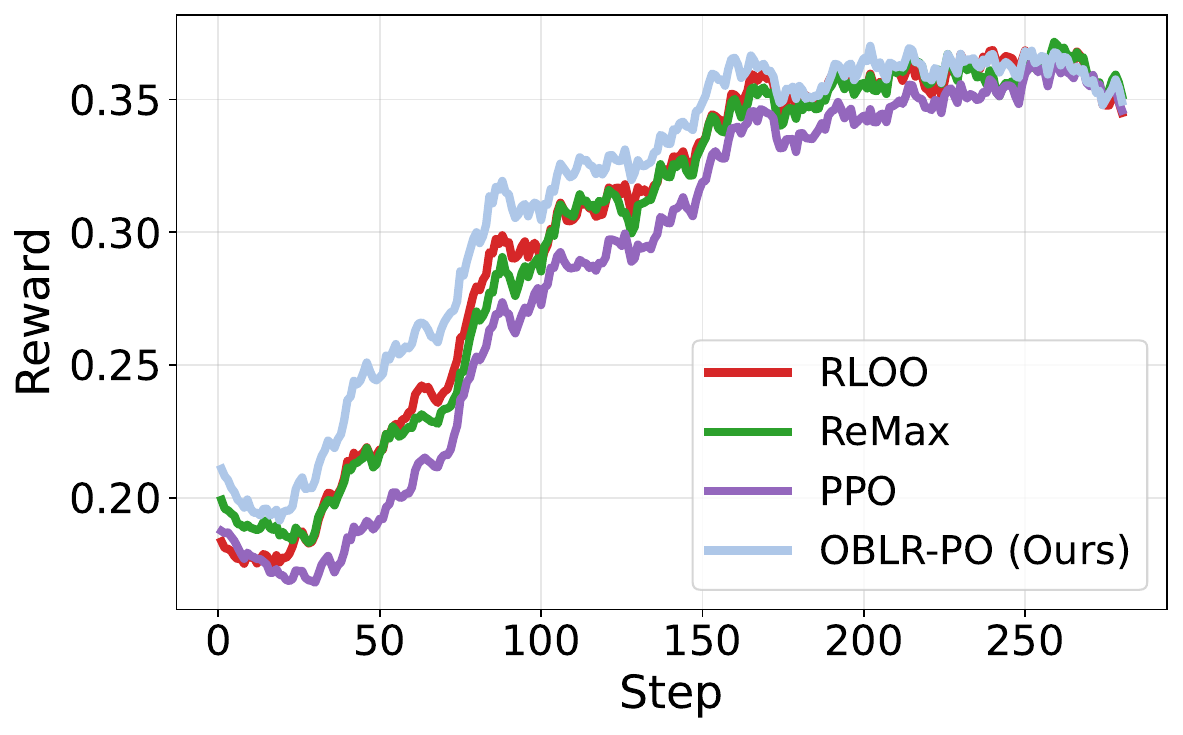}
    \caption{Training reward}
\end{subfigure}

\caption{Performance comparison of full OBLR-PO against RLOO, ReMax, and PPO. The left and middle panels show Pass@1 accuracy trajectories on GSM8K and MATH500, respectively. The right panel presents training reward curves.}
\label{fig:main-compare}
\vspace{-5mm}
\end{figure}

\vspace{-1mm}
\subsection{Effectiveness of the Variance-Optimal Baseline}
\label{sec:6.3}
\vspace{-1mm}

We next isolate the variance-optimal baseline (Algorithm~\ref{alg:optimal_baseline}), which uses the gradient-weighted, KL-regularized leave-one-out estimator of Equation~\eqref{eq:empirical_beta_baseline}. Following Section~\ref{sec:5.baseline}, this component is evaluated at a \emph{fixed} learning rate so that it can be compared directly against standard baselines without confounding from learning-rate adaptation. 

% \vspace{-2mm}
\paragraph{Final accuracy.} Table~\ref{tab:main} reports the final Pass@1 across the four benchmarks. Evaluated at a fixed learning rate, the variance-optimal baseline attains the best accuracy on \emph{all four} benchmarks among the fixed-learning-rate methods---PPO, ReMax, RLOO, and GRPO---confirming that the gradient-weighted, KL-regularized baseline yields stronger final performance than standard reward-mean, leave-one-out, and value-based baselines.

% \vspace{-2mm}
\paragraph{Gradient variance.} The last column of Table~\ref{tab:main} reports the inverse signal-to-noise ratio $1\over\widehat{\mathrm{SNR}}$ of the policy gradient (0.5\% trimmed mean over training). We evaluate the \emph{normalized} noise $1\over\widehat{\mathrm{SNR}}$ rather than the raw gradient-noise trace $\tr(\bH(\theta))$ because the raw variance is not comparable across estimators due to the different overall magnitude of different gradients. Dividing the noise by the squared signal removes this scale dependence and yields a dimensionless, scale-invariant measure. The variance-optimal baseline attains the lowest $1\over\widehat{\mathrm{SNR}}$ ($1.34\times10^4$) among all estimators, empirically confirming that the gradient-weighted baseline of Theorem~\ref{thm:optimal_beta_baseline} maximizes the gradient SNR (Takeaway~2, Section~\ref{sec:4.4}).

\subsection{Effectiveness of OBLR-PO}
\label{sec:6.4}
Finally, we evaluate the full OBLR-PO method, which combines the variance-optimal baseline with the SNR-adaptive learning rate. As shown in Table~\ref{tab:main}, adding the SNR-adaptive learning rate on top of the variance-optimal baseline further improves accuracy on AMC23, GSM8K, and MATH-500, yielding the best overall results on these benchmarks; on OlympiadBench the variance-optimal baseline alone remains highest, indicating a mild trade-off from the adaptive schedule on that set. Figures~\ref{fig:main-compare} corroborate this on the training trajectories, where OBLR-PO attains the strongest accuracy and reward curves against RLOO, ReMax, and PPO.

The learning-rate ablation in Appendix~\ref{app:experiments} (Figure~\ref{fig:app-oblrpo-incr}) isolates this effect, contrasting the variance-optimal baseline alone against the full OBLR-PO and showing a further gain in both accuracy and reward---closing the loop on the two-component design: the learning-rate rule (Section~\ref{sec:6.2}) and the baseline (Section~\ref{sec:6.3}) each help in isolation, and combine into a stronger method.

\section{Conclusion and Limitation}
In this work, we developed a theoretical framework that rigorously characterizes the bias, variance, and convergence of policy optimization under mild assumptions. Our analysis establishes an optimal learning-rate schedule governed by the signal-to-noise ratio and identifies the gradient-weighted baseline as a principled solution for variance reduction. These two improvements can be applied independently or combined as the full \emph{OBLR-PO} method. Experiments show that each improves performance individually, while their combination provides further gains. The learning-rate rule can also be naturally integrated with existing policy optimization methods to improve their performance.

However, two limitations remain. First, our guarantees are given with respect to an upper bound on the loss, leaving a gap to the realized optimization dynamics. Second, the $L$-smoothness assumption (Assumption~\ref{ass:2}), while common in theory, may not strictly hold in practice and requires further empirical validation. We hope these findings motivate future work to tighten theoretical bounds and test assumptions in large-scale RL for LLMs.

\section*{Acknowledgments}
The authors would like to express their sincere gratitude to Alibaba Group US DAMO Academy for its generous support throughout the course of this research and experimental process.

\bibliographystyle{plainnat}
\bibliography{reference}

\appendix
\onecolumn
%%%%%%%%%%%%%%%%%%%%%%%%%%%%%%%%%%%%%%%%%
\hrule height 4pt
\vskip 0.25in
\vskip -\parskip
%%%%%%%%%%%%%%%%%%%%%%%%%%%%%%%%%%%%%%%%%
\vbox{
    \centering
    \LARGE 
    \textbf{Appendix}
}
%%%%%%%%%%%%%%%%%%%%%%%%%%%%%%%%%%%%%%%%%
\vskip 0.29in
\vskip -\parskip
\hrule height 1pt
% \vskip 0.09in
% \vskip 0.15in
%%%%%%%%%%%%%%%%%%%%%%%%%%%%%%%%%%%%%%%%
\renewcommand*\footnoterule{} % remove the separator line

\newcommand\blfootnote[1]{%
  \begingroup
  \renewcommand\thefootnote{}\footnote{#1}%
  \addtocounter{footnote}{-1}%
  \endgroup
}
\renewcommand{\thethm}{A.\arabic{thm}}
\setcounter{thm}{0}
%%%%%%%%%%%%%%%%%%%%%%%%%%%%%%%%%%%%%%%%%

% Optionally include supplemental material (complete proofs, additional experiments and plots) in appendix.
% All such materials \textbf{SHOULD be included in the main submission.}

\section{Proofs for Section~\ref{sec:theory}}
\label{app:A}

\subsection{Proofs for Section~\ref{sec:4.1}}
\label{app:A.1}

\begin{thm}[Unbiasedness]
The estimator in Equation~\eqref{eq:beta_gradient_estimator} satisfies
\begin{equation}
\EE\!\left[\widehat{\nabla_\theta J(\theta)}\right]
=\nabla_\theta J(\theta).
\end{equation}
\end{thm}
\begin{proof}
By the definition of the gradient estimator,
\begin{align}
\widehat{\nabla_\theta J(\theta)}
={}&\frac{1}{N_tG_t}\sum_{j=1}^{N_t}\sum_{i=1}^{G_t}
\nabla_\theta\log\pi_\theta(o_{i,j}|q_j)\nonumber\\
&\quad\cdot\left(F(q_j,o_{i,j})-b_\theta(q_j)
-\beta\log\frac{\pi_\theta(o_{i,j}|q_j)}
{\pi_{\mathrm{ref}}(o_{i,j}|q_j)}\right).
\label{eq:app_unbiased_start}
\end{align}
Since the query groups are identically distributed, linearity of
expectation gives
\begin{equation}
\EE\!\left[\widehat{\nabla_\theta J(\theta)}\right]=\EE_{q\sim D,\,o\sim\pi_\theta(\cdot|q)}\!\left[\nabla_\theta\log\pi_\theta(o|q)\left(F(q,o)-b_\theta(q)-\beta\log\frac{\pi_\theta(o|q)}{\pi_{\mathrm{ref}}(o|q)}\right)\right].
\label{eq:app_unbiased_expectation}
\end{equation}
We evaluate the three terms in
Equation~\eqref{eq:app_unbiased_expectation} separately. First, for each
fixed query $q$, Assumption~\ref{ass:1} implies that $b_\theta(q)$ is
independent of the sampled output $o$. Therefore,
\begin{equation}
\EE_{o\sim\pi_\theta(\cdot|q)}\!\left[b_\theta(q)\nabla_\theta\log\pi_\theta(o|q)\right]=b_\theta(q)\int\nabla_\theta\pi_\theta(o|q)\,\dd o=b_\theta(q)\nabla_\theta\int\pi_\theta(o|q)\,\dd o=0.
\label{eq:app_baseline_zero}
\end{equation}
Second, the score-function identity yields
\begin{equation}
\EE_{o\sim\pi_\theta(\cdot|q)}\!\left[F(q,o)\nabla_\theta\log\pi_\theta(o|q)\right]=\int F(q,o)\nabla_\theta\pi_\theta(o|q)\,\dd o=\nabla_\theta\EE_{o\sim\pi_\theta(\cdot|q)}[F(q,o)].
\label{eq:app_reward_gradient}
\end{equation}
Third, because $\pi_{\mathrm{ref}}$ does not depend on $\theta$,
\begin{align}
&\nabla_\theta D_{\mathrm{KL}}
\left(\pi_\theta(\cdot|q)\|\pi_{\mathrm{ref}}(\cdot|q)\right)\nonumber\\
&=\nabla_\theta\int\pi_\theta(o|q)
\log\frac{\pi_\theta(o|q)}{\pi_{\mathrm{ref}}(o|q)}\,\dd o\nonumber\\
&=\EE_{o\sim\pi_\theta(\cdot|q)}\!\left[
\left(1+\log\frac{\pi_\theta(o|q)}{\pi_{\mathrm{ref}}(o|q)}\right)
\nabla_\theta\log\pi_\theta(o|q)\right]\nonumber\\
&=\EE_{o\sim\pi_\theta(\cdot|q)}\!\left[
\log\frac{\pi_\theta(o|q)}{\pi_{\mathrm{ref}}(o|q)}
\nabla_\theta\log\pi_\theta(o|q)\right],
\label{eq:app_kl_gradient}
\end{align}
where the last equality again follows from
$\EE_{o\sim\pi_\theta(\cdot|q)}[\nabla_\theta\log\pi_\theta(o|q)]=0$.
Substituting Equations~\eqref{eq:app_baseline_zero}--
\eqref{eq:app_kl_gradient} into
Equation~\eqref{eq:app_unbiased_expectation}, and then averaging over
$q\sim D$, proves
\begin{equation}
\EE\!\left[\widehat{\nabla_\theta J(\theta)}\right]
=\nabla_\theta J(\theta).
\end{equation}
\end{proof}

\begin{thm}[Variance Expression]
The covariance matrix of $\xi(\theta)$ is
\begin{equation}
\Var[\xi(\theta)]
=\frac{1}{N_tG_t}\bH(\theta)
+\frac{G_t-1}{N_tG_t}\bC(\theta).
\end{equation}
\end{thm}
\begin{proof}
Recall that
\begin{equation}
\xi(\theta)=\widehat{\nabla_\theta J(\theta)}-\nabla_\theta J(\theta).
\end{equation}
Subtracting the deterministic mean does not change covariance. Hence,
using Equation~\eqref{eq:beta_gradient_estimator},
\begin{align}
\Var[\xi(\theta)]
={}&\Var\!\Bigg[\frac{1}{N_tG_t}
\sum_{j=1}^{N_t}\sum_{i=1}^{G_t}
\nabla_\theta\log\pi_\theta(o_{i,j}|q_j)\left(F(q_j,o_{i,j})-b_\theta(q_j)
-\beta\log\frac{\pi_\theta(o_{i,j}|q_j)}
{\pi_{\mathrm{ref}}(o_{i,j}|q_j)}\right)\Bigg].
\label{eq:app_variance_start}
\end{align}
The $N_t$ query groups are independent. Thus, the covariance of the sum
over queries is the sum of the $N_t$ within-query covariance matrices:
\begin{align}
\Var[\xi(\theta)]
={}&\frac{1}{N_t^2G_t^2}\sum_{j=1}^{N_t}
\Var\!\Bigg[\sum_{i=1}^{G_t}
\nabla_\theta\log\pi_\theta(o_{i,j}|q_j)\left(F(q_j,o_{i,j})-b_\theta(q_j)
-\beta\log\frac{\pi_\theta(o_{i,j}|q_j)}
{\pi_{\mathrm{ref}}(o_{i,j}|q_j)}\right)\Bigg].
\label{eq:app_variance_queries}
\end{align}
For a fixed query group, the covariance of the sum consists of $G_t$
diagonal variance terms and $G_t(G_t-1)$ ordered cross-covariance terms.
By exchangeability, every diagonal term is $\bH(\theta)$ and every
cross term is $\bC(\theta)$. Consequently,
\begin{equation}
\Var[\xi(\theta)]=\frac{N_t}{N_t^2G_t^2}\left(G_t\bH(\theta)+G_t(G_t-1)\bC(\theta)\right)=\frac{1}{N_tG_t}\bH(\theta)+\frac{G_t-1}{N_tG_t}\bC(\theta).
\end{equation}
This is the claimed variance expression.
\end{proof}

\subsection{Proofs for Section~\ref{sec:4.2}}
\label{app:A.2}

\begin{thm}[Upper Bound]
\label{thm:a.3}
Under Assumptions~\ref{ass:1}--\ref{ass:3} and Assumption~\ref{ass:4},
\begin{align}
\EE[\cL(\theta_T)]
\le{}&\EE[\cL(\theta_0)]
-\sum_{t=0}^{T-1}\eta_t\EE\|\nabla\cL(\theta_t)\|_2^2+\frac{L}{2}\sum_{t=0}^{T-1}\eta_t^2
\left(\EE\|\nabla\cL(\theta_t)\|_2^2
+\frac{1}{N_t}\tr(\bH(\theta_t))\right).
\label{eq:upper_bound}
\end{align}
\end{thm}
\begin{proof}
Since $\cL(\theta)=J(\theta^*)-J(\theta)$, we have
$\nabla\cL(\theta)=-\nabla J(\theta)$. The policy update can therefore
be written as
\begin{equation}
\theta_{t+1}
=\theta_t-\eta_t\bigl(\nabla\cL(\theta_t)-\xi(\theta_t)\bigr).
\label{eq:app_loss_update_ref}
\end{equation}
By Assumption~\ref{ass:2}, $J$ is $L$-smooth, and hence $\cL$ is also
$L$-smooth. Applying the smoothness inequality to
Equation~\eqref{eq:app_loss_update_ref} gives
\begin{align}
\cL(\theta_{t+1})
\le{}&\cL(\theta_t)
-\eta_t\left\langle\nabla\cL(\theta_t),
\nabla\cL(\theta_t)-\xi(\theta_t)\right\rangle+\frac{L\eta_t^2}{2}
\|\nabla\cL(\theta_t)-\xi(\theta_t)\|_2^2.
\label{eq:app_smoothness_step}
\end{align}
Taking expectation and using
$\EE[\xi(\theta_t)\mid\theta_t]=0$, the inner-product term becomes
\begin{align}
&\EE\!\left[\left\langle\nabla\cL(\theta_t),
\nabla\cL(\theta_t)-\xi(\theta_t)\right\rangle\right]
=\EE\|\nabla\cL(\theta_t)\|_2^2.
\end{align}
Similarly, expanding the squared norm gives
\begin{equation}
\EE\|\nabla\cL(\theta_t)-\xi(\theta_t)\|_2^2=\EE\|\nabla\cL(\theta_t)\|_2^2-2\EE\langle\nabla\cL(\theta_t),\xi(\theta_t)\rangle+\EE\|\xi(\theta_t)\|_2^2=\EE\|\nabla\cL(\theta_t)\|_2^2+\tr(\Var(\xi(\theta_t))).
\label{eq:app_expand_noise}
\end{equation}
It remains to control the last variance term. By the variance expression
in Appendix~\ref{app:A.1},
\begin{equation}
\tr(\Var(\xi(\theta_t)))
=\frac{1}{N_tG_t}\tr(\bH(\theta_t))
+\frac{G_t-1}{N_tG_t}\tr(\bC(\theta_t)).
\label{eq:app_variance_trace_ref}
\end{equation}
For each coordinate, Cauchy--Schwarz bounds the covariance between two
distinct, identically distributed within-query contributions by their
common variance. Summing over all coordinates gives
\begin{equation}
\tr(\bC(\theta_t))\le\tr(\bH(\theta_t)).
\end{equation}
Therefore,
\begin{equation}
\tr(\Var(\xi(\theta_t)))\le\frac{1}{N_tG_t}\tr(\bH(\theta_t))+\frac{G_t-1}{N_tG_t}\tr(\bH(\theta_t))=\frac{1}{N_t}\tr(\bH(\theta_t)).
\label{eq:app_variance_trace_bound_ref}
\end{equation}
Substituting Equations~\eqref{eq:app_expand_noise} and
\eqref{eq:app_variance_trace_bound_ref} into
Equation~\eqref{eq:app_smoothness_step}, we obtain
\begin{align}
\EE[\cL(\theta_{t+1})]
\le{}&\EE[\cL(\theta_t)]
-\eta_t\EE\|\nabla\cL(\theta_t)\|_2^2+\frac{L\eta_t^2}{2}
\left(\EE\|\nabla\cL(\theta_t)\|_2^2
+\frac{1}{N_t}\tr(\bH(\theta_t))\right).
\end{align}
Finally, summing this inequality from $t=0$ to $T-1$ telescopes the loss
terms and proves Equation~\eqref{eq:upper_bound}.
\end{proof}

\subsection{Proofs for Section~\ref{sec:4.3}}
\label{app:A.3}

\begin{thm}[Optimal Learning Rate Schedule]
The upper bound in Theorem~\ref{thm:upperbound} is minimized pointwise by
\begin{equation}
\eta_t=\frac{1}{L}
\frac{\EE\|\nabla\cL(\theta_t)\|_2^2}
{\EE\|\nabla\cL(\theta_t)\|_2^2+N_t^{-1}\tr(\bH(\theta_t))}.
\end{equation}
\end{thm}
\begin{proof}
From Equation~\eqref{eq:upper_bound}, all terms involving $\eta_t$ are
\begin{equation}
-\eta_t\EE\|\nabla\cL(\theta_t)\|_2^2+\frac{L}{2}\eta_t^2\left(\EE\|\nabla\cL(\theta_t)\|_2^2+\frac{1}{N_t}\tr(\bH(\theta_t))\right).
\label{eq:app_lr_ref_quadratic}
\end{equation}
This is a convex quadratic function of $\eta_t$. Its derivative is
\begin{equation}
-\EE\|\nabla\cL(\theta_t)\|_2^2+L\eta_t\left(\EE\|\nabla\cL(\theta_t)\|_2^2+\frac{1}{N_t}\tr(\bH(\theta_t))\right).
\end{equation}
Setting the derivative to zero and solving for $\eta_t$ gives
\begin{equation}
\eta_t=\frac{1}{L}
\frac{\EE\|\nabla\cL(\theta_t)\|_2^2}
{\EE\|\nabla\cL(\theta_t)\|_2^2+N_t^{-1}\tr(\bH(\theta_t))}.
\end{equation}
Using
$\operatorname{SNR}(\theta_t)=
\EE\|\nabla\cL(\theta_t)\|_2^2/\tr(\bH(\theta_t))$
immediately gives
\begin{equation}
\eta_t=\frac{1}{L}
\frac{N_t\operatorname{SNR}(\theta_t)}
{1+N_t\operatorname{SNR}(\theta_t)}.
\end{equation}
\end{proof}

\begin{thm}
Under the optimal schedule,
\begin{equation}
\EE[\cL(\theta_T)]
\le\EE[\cL(\theta_0)]
-\sum_{t=0}^{T-1}
\frac{\EE\|\nabla\cL(\theta_t)\|_2^4}
{2L\left(\EE\|\nabla\cL(\theta_t)\|_2^2
+N_t^{-1}\tr(\bH(\theta_t))\right)}.
\end{equation}
\end{thm}
\begin{proof}
Substituting the optimal learning rate into
Equation~\eqref{eq:upper_bound}, the contribution of iteration $t$ is
\begin{align}
&-\frac{1}{L}
\frac{\EE\|\nabla\cL(\theta_t)\|_2^4}
{\EE\|\nabla\cL(\theta_t)\|_2^2
+N_t^{-1}\tr(\bH(\theta_t))}+\frac{1}{2L}
\frac{\EE\|\nabla\cL(\theta_t)\|_2^4}
{\EE\|\nabla\cL(\theta_t)\|_2^2
+N_t^{-1}\tr(\bH(\theta_t))}=-\frac{\EE\|\nabla\cL(\theta_t)\|_2^4}
{2L\left(\EE\|\nabla\cL(\theta_t)\|_2^2
+N_t^{-1}\tr(\bH(\theta_t))\right)}.
\end{align}
Summing over $t=0,\ldots,T-1$ proves the stated bound.
\end{proof}

\begin{lem}
\label{lem:a.8}
Under Assumptions~\ref{ass:3} and~\ref{ass:4},
\begin{equation}
\tr(\bH(\theta))\le4B^2M.
\end{equation}
\end{lem}
\begin{proof}
By the definition of $\bH(\theta)$,
\begin{align}
\tr(\bH(\theta))
={}&\EE\!\Bigg[\Bigg\|
\nabla_\theta\log\pi_\theta(o|q)
\left(F(q,o)-b_\theta(q)
-\beta\log\frac{\pi_\theta(o|q)}{\pi_{\mathrm{ref}}(o|q)}\right)-\nabla_\theta J(\theta)\Bigg\|_2^2\Bigg]\nonumber\\
\le{}&\EE\!\left[\|\nabla_\theta\log\pi_\theta(o|q)\|_2^2
\left(F(q,o)-b_\theta(q)
-\beta\log\frac{\pi_\theta(o|q)}{\pi_{\mathrm{ref}}(o|q)}\right)^2\right].
\label{eq:app_H_second_moment_ref}
\end{align}
The inequality follows from
$\Var(X)=\EE\|X\|_2^2-\|\EE X\|_2^2\le\EE\|X\|_2^2$.
By Assumption~\ref{ass:3},
\begin{equation}
\left|F(q,o)-b_\theta(q)-\beta\log\frac{\pi_\theta(o|q)}{\pi_{\mathrm{ref}}(o|q)}\right|\le\left|F(q,o)-\beta\log\frac{\pi_\theta(o|q)}{\pi_{\mathrm{ref}}(o|q)}\right|+|b_\theta(q)|\le2B.
\end{equation}
Combining this inequality with
Equation~\eqref{eq:app_H_second_moment_ref} and Assumption~\ref{ass:4}
gives
\begin{equation}
\tr(\bH(\theta))\le4B^2\EE\|\nabla_\theta\log\pi_\theta(o|q)\|_2^2\le4B^2M.
\end{equation}
\end{proof}

\begin{thm}[Convergence Analysis]
\label{thm:a.7}
Under the optimal learning-rate schedule,
\begin{equation}
\frac{1}{T}\sum_{t=0}^{T-1}\EE\|\nabla\cL(\theta_t)\|_2^2
=O(T^{-1/2}).
\end{equation}
\end{thm}
\begin{proof}
By the preceding theorem and the nonnegativity of $\cL$,
\begin{align}
&\sum_{t=0}^{T-1}
\frac{\EE\|\nabla\cL(\theta_t)\|_2^4}
{\EE\|\nabla\cL(\theta_t)\|_2^2
+N_t^{-1}\tr(\bH(\theta_t))}\le2L\left(\EE[\cL(\theta_0)]-\EE[\cL(\theta_T)]\right)
\le2L\EE[\cL(\theta_0)].
\label{eq:app_convergence_first_ref}
\end{align}
The right-hand side is independent of $T$. Applying Cauchy--Schwarz to
the two sequences
\begin{equation}
\frac{\EE\|\nabla\cL(\theta_t)\|_2^2}
{\sqrt{\EE\|\nabla\cL(\theta_t)\|_2^2
+N_t^{-1}\tr(\bH(\theta_t))}}
\quad\text{and}\quad
\sqrt{\EE\|\nabla\cL(\theta_t)\|_2^2
+N_t^{-1}\tr(\bH(\theta_t))}
\end{equation}
gives
\begin{align}
&\left(\sum_{t=0}^{T-1}
\EE\|\nabla\cL(\theta_t)\|_2^2\right)^2\le
\left(\sum_{t=0}^{T-1}
\frac{\EE\|\nabla\cL(\theta_t)\|_2^4}
{\EE\|\nabla\cL(\theta_t)\|_2^2
+N_t^{-1}\tr(\bH(\theta_t))}\right)
\left(\sum_{t=0}^{T-1}
\left(\EE\|\nabla\cL(\theta_t)\|_2^2
+\frac{1}{N_t}\tr(\bH(\theta_t))\right)\right).
\label{eq:app_convergence_cs_ref}
\end{align}
Combining Equations~\eqref{eq:app_convergence_first_ref} and
\eqref{eq:app_convergence_cs_ref}, and absorbing constants independent
of $T$ into the big-$O$ notation, gives
\begin{equation}
\frac{\left(\sum_{t=0}^{T-1}
\EE\|\nabla\cL(\theta_t)\|_2^2\right)^2}
{\sum_{t=0}^{T-1}\left(
\EE\|\nabla\cL(\theta_t)\|_2^2
+N_t^{-1}\tr(\bH(\theta_t))\right)}=O(1).
\label{eq:app_convergence_ratio_ref}
\end{equation}
Since $N_t\ge1$, Lemma~\ref{lem:a.8} implies
\begin{equation}
\left(\sum_{t=0}^{T-1}\EE\|\nabla\cL(\theta_t)\|_2^2\right)^2\le O(1)\left(\sum_{t=0}^{T-1}\EE\|\nabla\cL(\theta_t)\|_2^2+4TB^2M\right).
\label{eq:app_convergence_quadratic_ref}
\end{equation}
Solving this quadratic inequality gives
\begin{equation}
\sum_{t=0}^{T-1}\EE\|\nabla\cL(\theta_t)\|_2^2
=O(\sqrt{T}).
\end{equation}
Dividing both sides by $T$ proves
\begin{equation}
\frac{1}{T}\sum_{t=0}^{T-1}\EE\|\nabla\cL(\theta_t)\|_2^2
=O(T^{-1/2}).
\end{equation}
\end{proof}

\subsection{Proofs for Section~\ref{sec:4.4}}
\label{app:A.4}

\begin{thm}[KL-Regularized Optimal Baseline]
\label{thm:a.9}
The variance-optimal baseline is
\begin{align}
b_\theta(q)
={}&\frac{\EE_{o\sim\pi_\theta(\cdot|q)}\!\left[
\|\nabla_\theta\log\pi_\theta(o|q)\|_2^2
\left(F(q,o)-\beta\log\frac{\pi_\theta(o|q)}
{\pi_{\mathrm{ref}}(o|q)}\right)\right]}{\EE_{o\sim\pi_\theta(\cdot|q)}
[\|\nabla_\theta\log\pi_\theta(o|q)\|_2^2]}.
\end{align}
\end{thm}
\begin{proof}
The mean gradient is independent of $b_\theta(q)$ because
\begin{equation}
\EE_{o\sim\pi_\theta(\cdot|q)}\!\left[
 b_\theta(q)\nabla_\theta\log\pi_\theta(o|q)\right]=0.
\end{equation}
Therefore, minimizing $\tr(\bH(\theta))$ is equivalent to minimizing
\begin{align}
\EE_{q\sim D}\EE_{o\sim\pi_\theta(\cdot|q)}\!\Bigg[
&\|\nabla_\theta\log\pi_\theta(o|q)\|_2^2\left(F(q,o)-\beta\log\frac{\pi_\theta(o|q)}
{\pi_{\mathrm{ref}}(o|q)}-b_\theta(q)\right)^2\Bigg].
\label{eq:app_baseline_objective_ref}
\end{align}
The baseline can be chosen separately for each query, so we minimize the
inner conditional expectation for a fixed $q$. Expanding the square, the
terms that depend on $b_\theta(q)$ are
\begin{align}
&b_\theta(q)^2
\EE_{o\sim\pi_\theta(\cdot|q)}
[\|\nabla_\theta\log\pi_\theta(o|q)\|_2^2]-2b_\theta(q)
\EE_{o\sim\pi_\theta(\cdot|q)}\!\left[
\|\nabla_\theta\log\pi_\theta(o|q)\|_2^2
\left(F(q,o)-\beta\log\frac{\pi_\theta(o|q)}
{\pi_{\mathrm{ref}}(o|q)}\right)\right].
\label{eq:app_baseline_quadratic_ref}
\end{align}
Differentiating Equation~\eqref{eq:app_baseline_quadratic_ref} with
respect to $b_\theta(q)$ gives
\begin{align}
&2b_\theta(q)
\EE_{o\sim\pi_\theta(\cdot|q)}
[\|\nabla_\theta\log\pi_\theta(o|q)\|_2^2]\nonumber-2\EE_{o\sim\pi_\theta(\cdot|q)}\!\left[
\|\nabla_\theta\log\pi_\theta(o|q)\|_2^2
\left(F(q,o)-\beta\log\frac{\pi_\theta(o|q)}
{\pi_{\mathrm{ref}}(o|q)}\right)\right].
\end{align}
Setting this derivative to zero and solving for $b_\theta(q)$ yields
\begin{align}
b_\theta(q)
={}&\frac{\EE_{o\sim\pi_\theta(\cdot|q)}\!\left[
\|\nabla_\theta\log\pi_\theta(o|q)\|_2^2
\left(F(q,o)-\beta\log\frac{\pi_\theta(o|q)}
{\pi_{\mathrm{ref}}(o|q)}\right)\right]}{\EE_{o\sim\pi_\theta(\cdot|q)}
[\|\nabla_\theta\log\pi_\theta(o|q)\|_2^2]}.
\end{align}
The coefficient of $b_\theta(q)^2$ in
Equation~\eqref{eq:app_baseline_quadratic_ref} is nonnegative, so this
stationary point is a global minimizer. Setting $\beta=0$ recovers the
unregularized optimal baseline in the reference analysis.
\end{proof}

% \clearpage
\section{Pseudocode for the Methodological Components}
\label{app:method_algorithms}
This section provides the pseudocode for the two modular components and their combined OBLR-PO update introduced in Section~\ref{sec:5}.

% Preserve their original numbering even though the floats are deferred.
\setcounter{algorithm}{0}
\begin{algorithm}[htb]
\caption{Policy Optimization with the Variance-Optimal Baseline}
\label{alg:optimal_baseline}
\begin{algorithmic}[1]
\REQUIRE Group rollouts $\{(q,\{o_i,r_i\}_{i=1}^{G_t})\}$, reference policy $\pi_{\mathrm{ref}}$, KL coefficient $\beta$, and fixed learning rate $\eta$
\FOR{$i=1,\dots,G_t$}
    \STATE Compute $R_i$ using Equation~\eqref{eq:empirical_kl_reward}
    \STATE Compute $s_i=\|\nabla_\theta\log\pi_\theta(o_i|q)\|_2^2$
\ENDFOR
\FOR{$i=1,\dots,G_t$}
    \STATE Compute $\hat b_\theta(q,o_i)$ using Equation~\eqref{eq:empirical_beta_baseline}
    \STATE Set $\hat A(q,o_i)=R_i-\hat b_\theta(q,o_i)$
\ENDFOR
\STATE Construct $\hat g_t$ from the resulting advantages
\STATE Update $\theta\leftarrow\theta+\eta\hat g_t$
\end{algorithmic}
\end{algorithm}

\begin{algorithm}[htb]
\caption{Policy Optimization with an SNR-Adaptive Learning Rate}
\label{alg:adaptive_lr}
\begin{algorithmic}[1]
\REQUIRE Group rollouts $\{(q,\{o_i,r_i\}_{i=1}^{G_t})\}$, baseline rule $b$, base rate $\eta_0$, and learning-rate band $[\eta_{\min},\eta_{\max}]$
\FOR{$i=1,\dots,G_t$}
    \STATE Compute $\hat A(q,o_i)$ using $b$
\ENDFOR
\STATE Construct total-gradient samples from the advantages
\STATE Estimate $\widehat{\operatorname{SNR}}(\theta_t)$ from these samples
\STATE Compute $\hat\eta_t$ using Equation~\eqref{eq:empirical_adaptive_lr}
\STATE Aggregate the samples into $\hat g_t$
\STATE Update $\theta\leftarrow\theta+\hat\eta_t\hat g_t$
\end{algorithmic}
\end{algorithm}

% \begin{algorithm}[htb]
% \caption{Optimal Baseline and Learning-Rate Policy Optimization (OBLR-PO)}
% \label{alg:OBLRPO}
% \begin{algorithmic}[1]
% \REQUIRE Initial policy $\pi_{\theta_0}$, reference policy $\pi_{\mathrm{ref}}$, reward function $F$, KL coefficient $\beta$, base rate $\eta_0$, and rate band $[\eta_{\min},\eta_{\max}]$
% \FOR{$t=0,\dots,T-1$}
%     \STATE Set $\theta_{\mathrm{old}}\leftarrow\theta_t$ and sample a batch of $N_t$ queries
%     \STATE For each query $q$, sample $G_t$ responses $o_i\sim\pi_{\theta_{\mathrm{old}}}(\cdot|q)$
%     \FOR{each sampled response $o_i$}
%         \STATE Compute $R_i=F(q,o_i)-\beta\log\frac{\pi_{\theta_t}(o_i|q)}{\pi_{\mathrm{ref}}(o_i|q)}$
%         \STATE Compute $s_i=\|\nabla_\theta\log\pi_{\theta_t}(o_i|q)\|_2^2$
%     \ENDFOR
%     \FOR{each sampled response $o_i$}
%         \STATE Compute $\hat b_\theta(q,o_i)=\frac{\sum_{j\ne i}s_jR_j}{\sum_{j\ne i}s_j}$
%         \STATE Set $\hat A(q,o_i)=R_i-\hat b_\theta(q,o_i)$
%     \ENDFOR
%     \STATE Construct importance-weighted total-gradient samples using $\hat A(q,o_i)$
%     \STATE Estimate $\widehat{\operatorname{SNR}}(\theta_t)$ and aggregate the samples into $\hat g_t$
%     \STATE Set $\hat\eta_t=\operatorname{clamp}\!\left(\eta_0\frac{N_t\widehat{\operatorname{SNR}}(\theta_t)}{1+N_t\widehat{\operatorname{SNR}}(\theta_t)},\eta_{\min},\eta_{\max}\right)$
%     \STATE Update $\theta_{t+1}\leftarrow\theta_t+\hat\eta_t\hat g_t$
% \ENDFOR
% \end{algorithmic}
% \end{algorithm}

\clearpage 

\section{Experimental Setups}
\label{app:setup}

We provide the full experimental configuration below. All runs use Qwen3-4B-Base trained on DeepMath-103K with verifiable rule-based rewards, optimized with SGD under FSDP and a vLLM rollout backend. Table~\ref{tab:hparams} lists the shared hyperparameters; the SNR-adaptive learning-rate band and per-method configuration notes follow.

\begin{table}[h]
\centering
\caption{Hyperparameters for Qwen3-4B-Base experiments.}
\label{tab:hparams}
\begin{tabular}{ll}
\toprule
Setting & Value \\
\midrule
Base model & Qwen3-4B-Base \\
Training set & DeepMath-103K \\
Evaluation sets & MATH-500, GSM8K, \\
                & OlympiadBench, AMC23 \\
Optimizer & SGD (momentum $0$, weight decay $0$) \\
Group size $G$ & $8$ \\
Prompt batch size & $32$ \\
Mini-batch size & $32$ \\
Max prompt / response length & $1024$ / $1024$ \\
KL regularizer type & \texttt{low\_var\_kl} \\
KL coefficient $\beta$ & $10^{-3}$ \\
Entropy coefficient & $0$ \\
Gradient-norm clip & $1.0$ \\
Fixed learning rate & $10^{-2}$ \\
SNR-adaptive LR band & $[7\times10^{-3},\, 2\times10^{-2}]$ \\
Rollout backend / TP & vLLM / $2$ \\
Hardware & $4$ A800 GPUs, FSDP \\
Evaluation metric & \texttt{Accuracy (Pass@1)}\\
\bottomrule
\end{tabular}
\end{table}

\paragraph{SNR-adaptive learning rate.}
The effective learning rate is $\eta_t=\mathrm{clamp}\!\left(\eta_0\cdot\frac{M\,\widehat{\mathrm{SNR}}}{1+M\,\widehat{\mathrm{SNR}}},\,\eta_{\min},\,\eta_{\max}\right)$, where $M$ equals the mini-batch size and $\widehat{\mathrm{SNR}}$ is estimated from per-micro-batch gradient deltas during gradient accumulation as shown in Section~\ref{app:estimators}.
% TODO: state per-method base lr0 values used in the SNR-adaptive runs.

\section{Computation of the Gradient and SNR Estimators}
\label{app:estimators}
This section makes precise how the two central estimators of OBLR-PO---the gradient-weighted advantage and the SNR-adaptive learning rate---are computed. We follow the notation of the main text: for a query $q$, the old policy $\pi_{\theta_{\mathrm{old}}}$ samples a group of $G$ outputs $\{o_j\}_{j=1}^{G}$ with scalar rewards $r_j=F(q,o_j)$; $\pi_\theta$ and $\pi_{\mathrm{ref}}$ are the current and reference policies, and $\beta\ge0$ is the KL coefficient. We abbreviate the per-sequence \emph{score norm} by
\begin{equation}
    s_j \;=\; \bigl\|\nabla_\theta\log\pi_\theta(o_j|q)\bigr\|_2^2 .
    \label{eq:app_score_norm}
\end{equation}

\subsection{Gradient Estimator}
\label{app:estimators.grad}

\paragraph{General Formulation.}
The gradient estimator weights each sampled trajectory's score function by its advantage, where the advantage itself combines a KL-regularized reward with the variance-optimal baseline. For each output $o_j$, define the KL-regularized reward
\begin{equation}
    R_j \;=\; F(q,o_j)-\beta\log\frac{\pi_\theta(o_j|q)}{\pi_{\mathrm{ref}}(o_j|q)} .
    \label{eq:app_klreward}
\end{equation}
By Theorem~\ref{thm:optimal_beta_baseline}, the variance-optimal baseline for $o_i$ is the score-norm-weighted leave-one-out average of Equation~\eqref{eq:empirical_beta_baseline},
\begin{equation}
    \hat b_\theta(q,o_i)\;=\;\frac{\sum_{j\neq i}s_j\,R_j}{\sum_{j\neq i}s_j},
    \label{eq:app_weighted_loo}
\end{equation}
which reduces to the unweighted RLOO baseline $\tfrac{1}{G-1}\sum_{j\neq i}R_j$ when the score norms $s_j$ are equal, and the corresponding advantage is $\hat A(q,o_i)=R_i-\hat b_\theta(q,o_i)$. Given a batch of $N_t$ queries, each with $G_t$ sampled outputs, the policy gradient is estimated by the importance-weighted score-function form of Equation~\eqref{eq:beta_gradient_estimator},
\begin{equation}
    \hat{g_t}=\frac{1}{N_t G_t}\sum_{j=1}^{N_t}\sum_{i=1}^{G_t}\rho_{i,j}\,\nabla_\theta\log\pi_\theta(o_{i,j}|q_j)\,\hat A(q_j,o_{i,j}),
    \label{eq:app_grad_est}
\end{equation}
where $\rho_{i,j}=\pi_\theta(o_{i,j}|q_j)/\pi_{\theta_{\mathrm{old}}}(o_{i,j}|q_j)$ is the importance-sampling ratio correcting for sampling under the old policy $\pi_{\theta_{\mathrm{old}}}$, clipped to a half-width $\epsilon$ about $1$. Each sampled trajectory thus contributes its score function $\nabla_\theta\log\pi_\theta(o_{i,j}|q_j)$ scaled by the ratio $\rho_{i,j}$ and its advantage $\hat A(q_j,o_{i,j})$, with the KL regularization carried inside $\hat A$ through the KL-regularized reward $R_{i,j}$.

\paragraph{Implementation in the Experiment}
Our experiment evaluates $\hat A(q,o_i)$ as follows.
\begin{enumerate}
    \item \emph{KL term.} The log-ratio in Equation~\eqref{eq:app_klreward} is estimated per response token by the non-negative, low-variance (k3) estimator
    \begin{equation}
        \widehat{\mathrm{KL}}=e^{\Delta}-\Delta-1,\qquad \Delta=\log\pi_{\mathrm{ref}}-\log\pi_\theta ,
        \label{eq:app_k3}
    \end{equation}
    with the KL coefficient $\beta$ of Table~\ref{tab:hparams}. The penalty is folded into the reward $R_j$ \emph{before} the baseline, so its KL contribution is itself variance-reduced by the score-norm weighting; no separate loss-level KL term is added, so the update coincides with the estimator of Equation~\eqref{eq:app_grad_est}.
    \item \emph{Score norms.} Each $s_j$ in Equation~\eqref{eq:app_score_norm} is computed over the full model by one additional backward pass per sequence on the surrogate loss $-\sum_{t}\log\pi_\theta(o_{j,t}\mid q,o_{j,1:t-1})$, whose gradient is exactly $-\nabla_\theta\log\pi_\theta(o_j|q)$; the pass retains unsharded (non-reduce-scattered) gradients so that $s_j$ is the true squared norm.
    \item \emph{Baseline.} Equation~\eqref{eq:app_weighted_loo} is evaluated directly, with a fallback to the unweighted RLOO baseline whenever the denominator $\sum_{j\neq i}s_j$ underflows.
    \item \emph{Surrogate.} The ratio $\rho_{i,j}$ is applied at the token level, its log clamped for numerical stability, and clipping is effectively disabled by taking $\epsilon$ large, so the objective keeps the raw importance ratio; the per-token losses are aggregated by a token-mean.
\end{enumerate}

\subsection{SNR Estimator}
\label{app:estimators.snr}

\paragraph{General Formulation.}
Theorem~\ref{thm:opt_lr} sets the step size from the gradient signal-to-noise ratio,
\begin{equation}
    \operatorname{SNR}(\theta)=\frac{\EE\|\nabla_\theta\cL(\theta)\|_2^2}{\tr(\bH(\theta))},
    \qquad
    \eta_t=\frac{1}{L}\cdot\frac{N_t\operatorname{SNR}(\theta_t)}{1+N_t\operatorname{SNR}(\theta_t)} .
\end{equation}
Since $L$ and $\nabla_\theta\cL$ are unavailable at run time, we estimate the signal and noise directly from the gradients produced during gradient accumulation. Suppose the mini-batch gradient is accumulated over $K$ micro-batches; let $\hat g$ be the running accumulated mean gradient, $\delta_k$ the increment contributed by micro-batch $k$, and $n_k,\tau_k$ its number of sequences and (masked) tokens, with mini-batch totals $B=\sum_k n_k$ and $N_\tau=\sum_k\tau_k$. Treating the $K$ increments as approximately i.i.d.\ samples of the mini-batch gradient---with a token rescaling $(N_\tau/\tau_k)^2$ that undoes the token-mean normalization---we form
\begin{align}
    S_1 &= \sum_{k=1}^{K} n_k\Bigl(\tfrac{N_\tau}{\tau_k}\Bigr)^{2}\|\delta_k\|_2^2,\\
    \tr(\hat\Sigma) &= \max\!\Bigl(\tfrac{S_1-B\|\hat g\|_2^2}{K-1},\,\varepsilon\Bigr),
    \label{eq:app_noise}\\
    \|\hat G\|_2^2 &= \max\!\bigl(\|\hat g\|_2^2-\tfrac{1}{B}\tr(\hat\Sigma),\,0\bigr),
    \label{eq:app_signal}\\
    \widehat{\operatorname{SNR}} &= \frac{\|\hat G\|_2^2}{\tr(\hat\Sigma)} ,
\end{align}
where Equation~\eqref{eq:app_noise} is a Bessel-corrected sample variance floored at $\varepsilon>0$, and Equation~\eqref{eq:app_signal} debiases the squared mean-gradient norm by subtracting the sampling variance of the mean. The learning rate then follows the clamped SNR schedule of Appendix~\ref{app:setup},
\begin{align}
    \mathrm{coeff} &= \mathrm{clamp}\!\Bigl(\tfrac{M\,\widehat{\operatorname{SNR}}}{1+M\,\widehat{\operatorname{SNR}}},\,\mathrm{coeff}_{\min},\,1\Bigr),\\
    \eta_t &= \mathrm{clamp}\bigl(\eta_0\cdot\mathrm{coeff},\,\eta_{\min},\,\eta_{\max}\bigr),
\end{align}
with prefactor $M$, coefficient floor $\mathrm{coeff}_{\min}$, base rate $\eta_0$, and effective-rate band $[\eta_{\min},\eta_{\max}]$.

\paragraph{Implementation in the Experiment.}
In our experiment, the increments $\delta_k$ are read off the data-parallel-synchronized running gradient: after each micro-batch backward we record $\|\delta_k\|_2^2$ from a transient \texttt{fp32} copy of the sharded gradient, together with $n_k$ and $\tau_k$, and sum these across data-parallel ranks so that $\hat g$, $\|\hat G\|_2^2$, and $\tr(\hat\Sigma)$ are global mini-batch quantities. The prefactor $M$ is set to the mini-batch size in prompts, the coefficient floor to $\mathrm{coeff}_{\min}=0$, and the noise floor to $\varepsilon=10^{-12}$; the base rate $\eta_0$ and the band $[\eta_{\min},\eta_{\max}]$ take the values in Table~\ref{tab:hparams}. Gradient-norm clipping still applies to $\hat g$ before the scaled step, and the mechanism is disabled under sequence parallelism or a mixed-precision gradient scaler.

% \paragraph{Reconciliation with the theory.}
% The signal $\|\hat G\|_2^2$ estimates $\EE\|\nabla_\theta\cL(\theta)\|_2^2$ and the noise $\tr(\hat\Sigma)$ estimates the per-sequence trace $N_t^{-1}\tr(\bH(\theta))$, so $\widehat{\operatorname{SNR}}$ tracks $N_t\operatorname{SNR}(\theta)$ up to the $1/L$ factor absorbed into $\eta_0$ and the clamp band, matching the schedule of Theorem~\ref{thm:opt_lr}. The estimator is exact only under a per-sequence i.i.d.\ reading of the accumulated gradient.

\clearpage
\section{Additional Results}
\label{app:experiments}
In this section, we provide additional results: the SNR-adaptive learning rate on the benchmarks not shown in the main text (Section~\ref{sec:6.2}), the OBLR-PO learning-rate ablation (Section~\ref{sec:6.4}), the average learning rate realized by the SNR-adaptive schedule, and the training-time overhead.

\subsection{SNR-Adaptive Learning Rate on MATH-500}
The main text shows the GSM8K accuracy and the training-reward curves for the SNR-adaptive learning rate (Figure~\ref{fig:lr}). Figure~\ref{fig:app-lr-math500} provides the MATH-500 accuracy curves, comparing a fixed learning rate against the SNR-Adaptive LR for RLOO, ReMax, and GRPO. Adding the SNR-Adaptive LR consistently matches or improves accuracy. RLOO and ReMax satisfy Assumption~\ref{ass:1}, whereas GRPO is reported as an additional, beyond-theory case.

\begin{figure}[ht]
\centering
\begin{subfigure}{0.32\linewidth}
    \includegraphics[width=\linewidth]{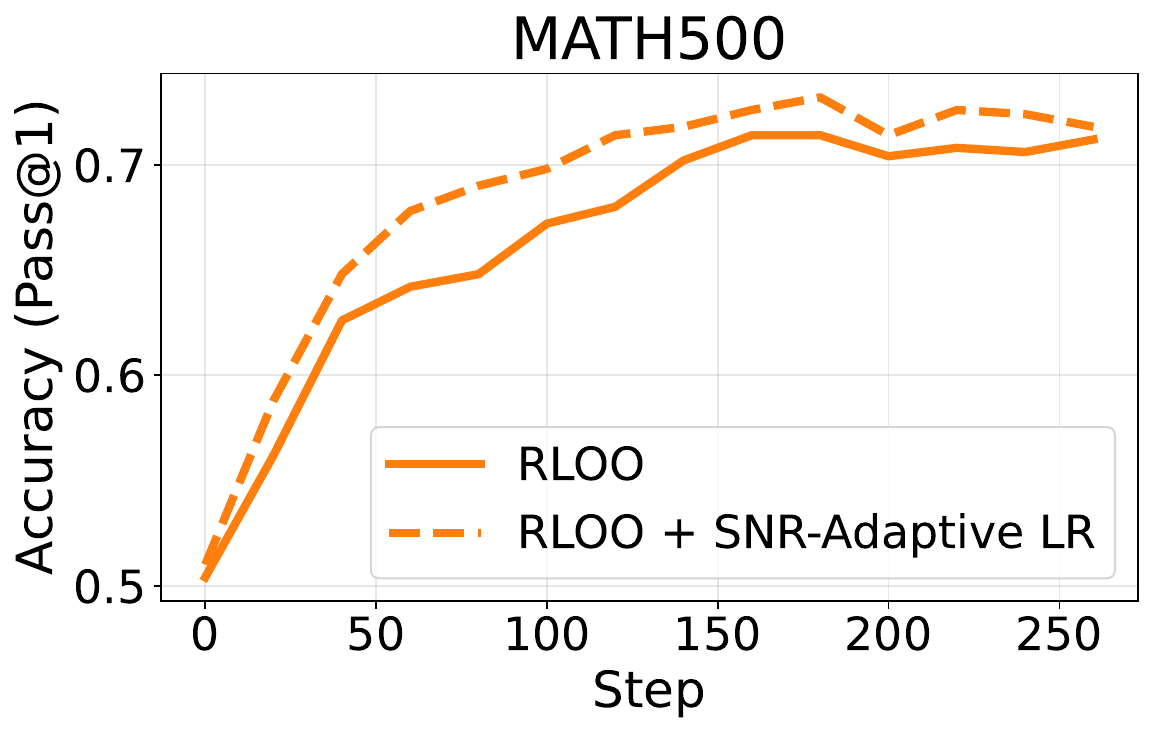}
    \caption{MATH-500, RLOO}
\end{subfigure}
\hfill
\begin{subfigure}{0.32\linewidth}
    \includegraphics[width=\linewidth]{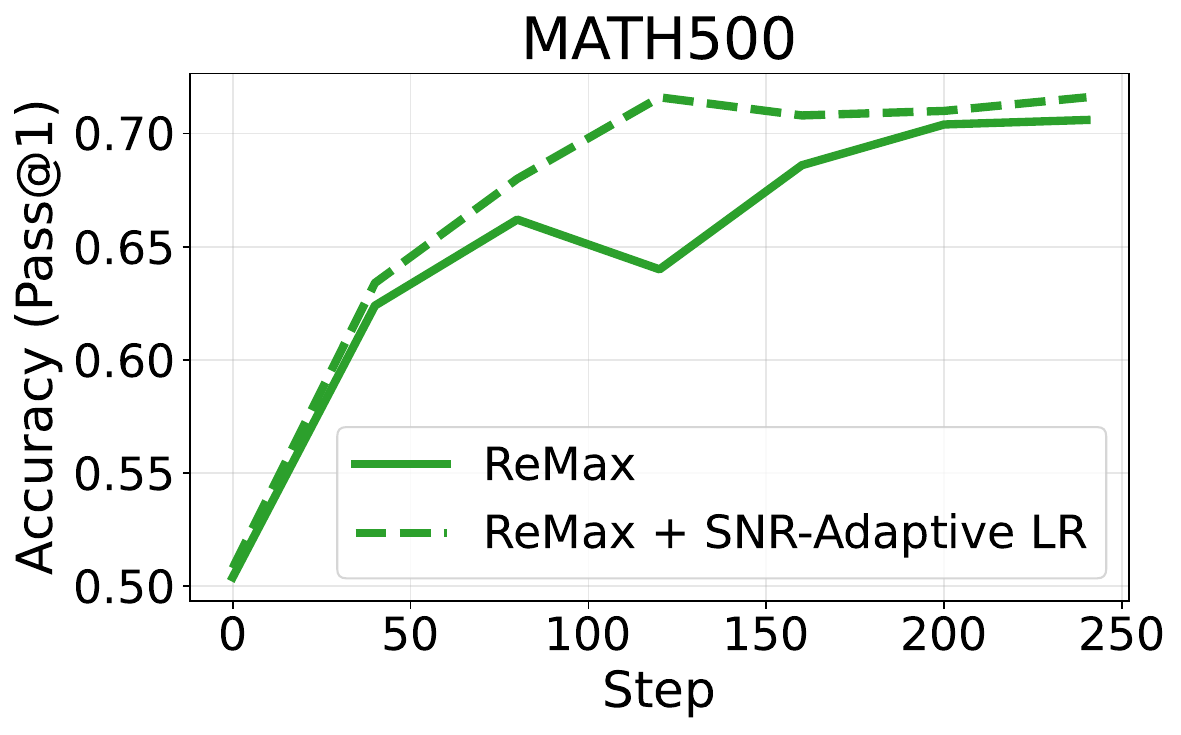}
    \caption{MATH-500, ReMax}
\end{subfigure}
\hfill
\begin{subfigure}{0.32\linewidth}
    \includegraphics[width=\linewidth]{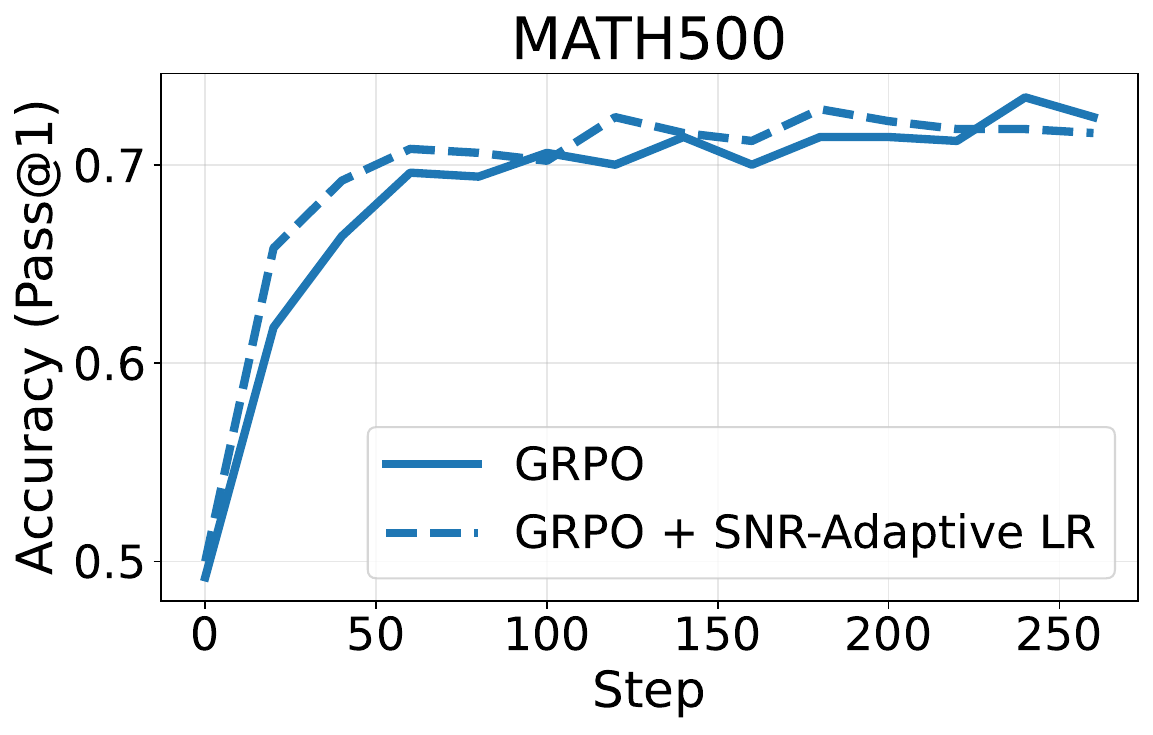}
    \caption{MATH-500, GRPO}
\end{subfigure}
\caption{Pass@1 accuracy on MATH-500 with a fixed learning rate versus the SNR-Adaptive LR, for RLOO, ReMax, and GRPO.}
\label{fig:app-lr-math500}
\end{figure}

\subsection{SNR-Adaptive Learning Rate on OlympiadBench}
Figure~\ref{fig:app-lr-olympiad} provides the OlympiadBench accuracy curves, comparing a fixed learning rate against the SNR-Adaptive LR for RLOO and GRPO. As on the other benchmarks, adding the SNR-Adaptive LR matches or improves accuracy.

\begin{figure}[ht]
\centering
\begin{subfigure}{0.49\linewidth}
    \includegraphics[width=\linewidth]{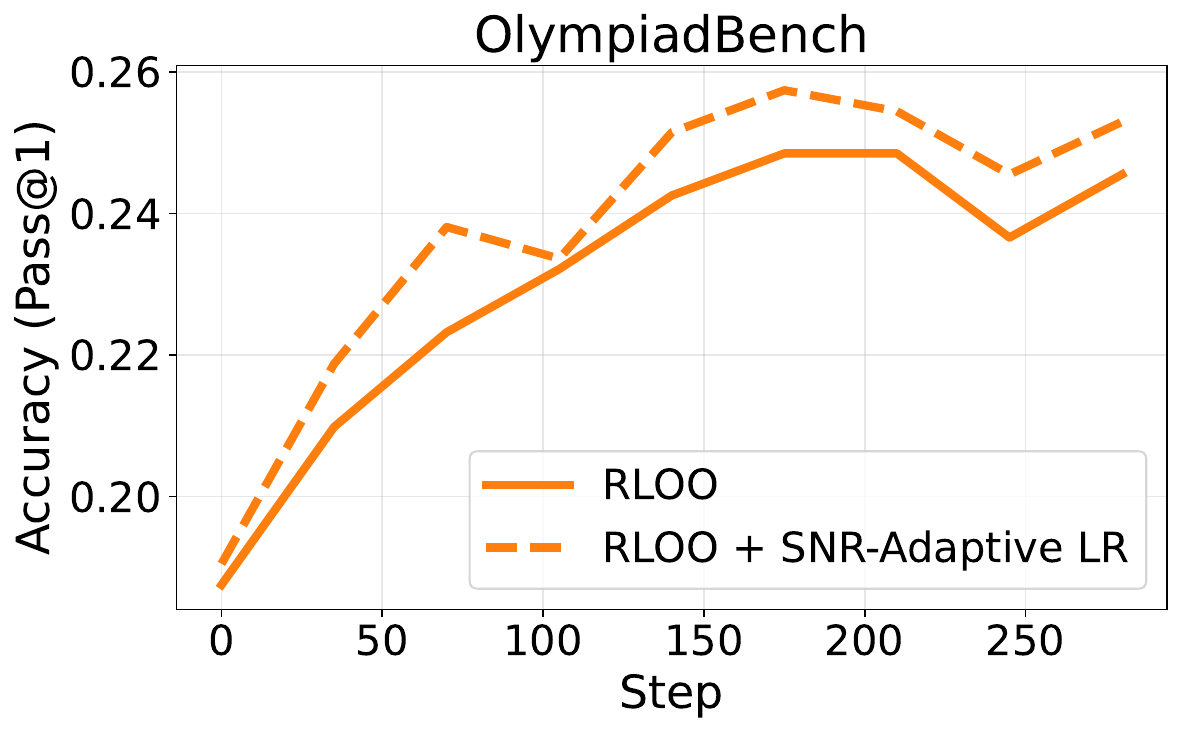}
    \caption{OlympiadBench, RLOO}
\end{subfigure}
\hfill
\begin{subfigure}{0.49\linewidth}
    \includegraphics[width=\linewidth]{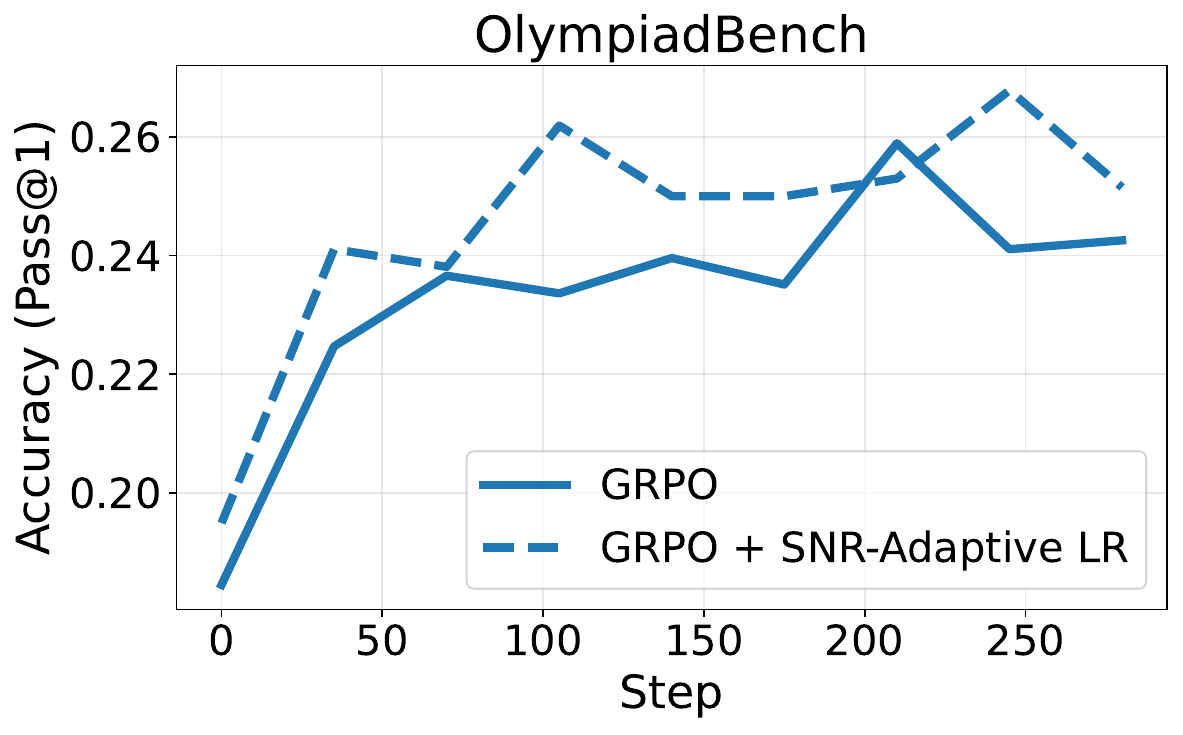}
    \caption{OlympiadBench, GRPO}
\end{subfigure}
\caption{Pass@1 accuracy on OlympiadBench with a fixed learning rate versus the SNR-Adaptive LR, for RLOO and GRPO.}
\label{fig:app-lr-olympiad}
\end{figure}

\subsection{OBLR-PO Learning-Rate Ablation}
Figure~\ref{fig:app-oblrpo-incr} isolates the incremental effect of the SNR-adaptive learning rate within OBLR-PO, comparing the variance-optimal baseline alone against the full OBLR-PO (variance-optimal baseline ${+}$ SNR-Adaptive LR). The learning-rate component yields a further gain over the baseline-only variant, as discussed in Section~\ref{sec:6.4}.

\begin{figure}[ht]
    \centering
    \begin{subfigure}{0.51\linewidth}
        \centering
        \includegraphics[width=\linewidth]{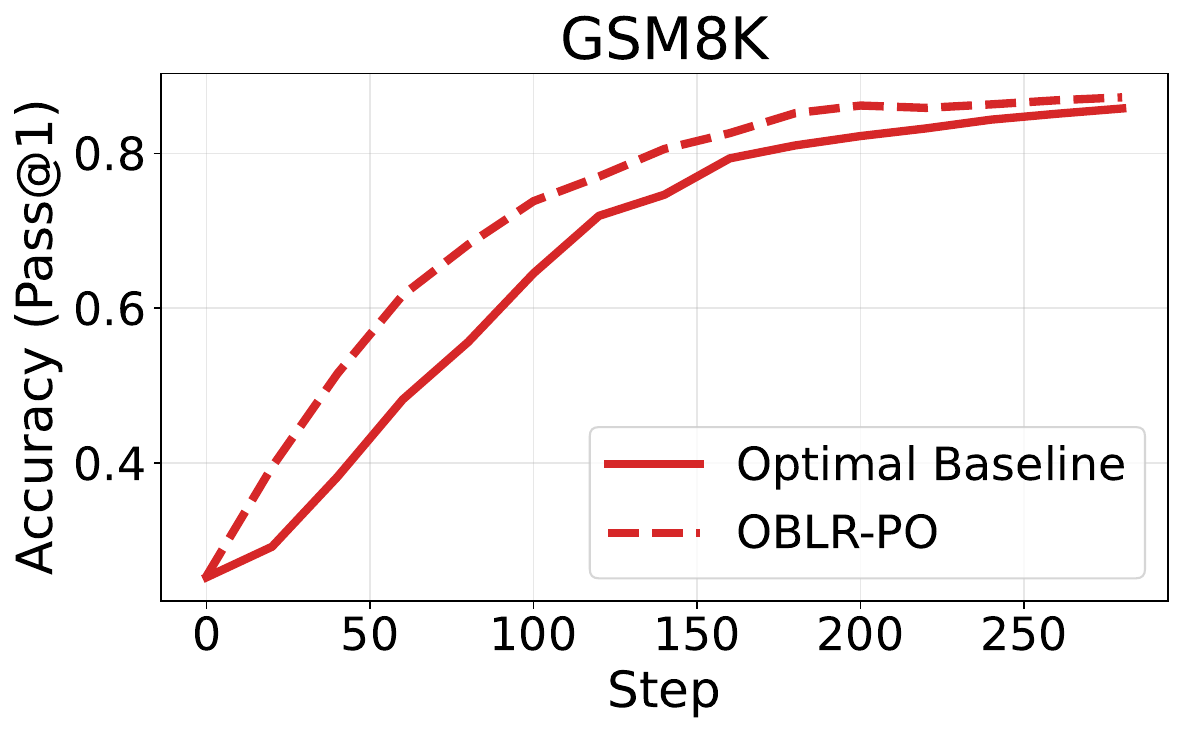}
        \caption{GSM8K accuracy}
    \end{subfigure}
    \hfill
    \begin{subfigure}{0.47\linewidth}
        \centering
        \includegraphics[width=\linewidth]{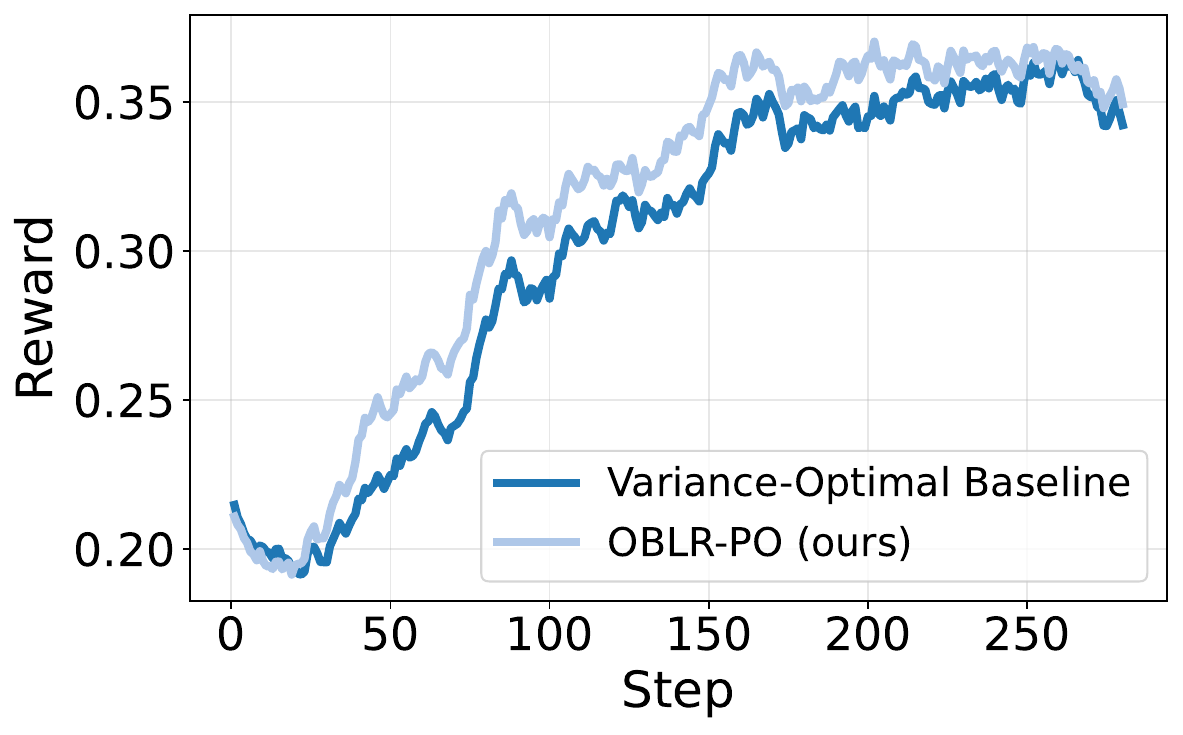}
        \caption{Training reward}
    \end{subfigure}
    \caption{Incremental effect of the SNR-adaptive learning rate within OBLR-PO: the variance-optimal baseline alone versus the full OBLR-PO (variance-optimal baseline ${+}$ SNR-Adaptive LR).}
    \label{fig:app-oblrpo-incr}
\end{figure}

\subsection{Average Learning Rate of the SNR-Adaptive Schedule}
Because the SNR-adaptive schedule (Equation~\eqref{eq:empirical_adaptive_lr}) varies the step size over training, we verify that its \emph{average} effective learning rate stays close to the fixed learning rate $1\times10^{-2}$ used by the baseline runs, so that the two are compared at a matched step-size budget. Table~\ref{tab:avglr} reports the mean SNR-adaptive learning rate over training for each SNR-LR experiment. All averages fall near $1\times10^{-2}$ and well within the clamp band $[7\times10^{-3},\,2\times10^{-2}]$, confirming that the SNR-adaptive and fixed-learning-rate runs operate at comparable effective step sizes and that their accuracy and reward comparisons are fair.

\begin{table}[ht]
\centering
\caption{Mean SNR-adaptive learning rate over training for each SNR-adaptive learning rate experiment.}
\label{tab:avglr}
\begin{tabular}{lc}
\toprule
Experiment & Mean SNR-adaptive LR \\
\midrule
GRPO ${+}$ SNR-adaptive LR  & $1.02\times10^{-2}$ \\ % TODO: replace with measured average
RLOO ${+}$ SNR-adaptive LR  & $ 1.10\times10^{-2}$ \\ % TODO: replace with measured average
ReMax ${+}$ SNR-adaptive LR & $ 1.10\times10^{-2}$ \\ % TODO: replace with measured average
OBLR-PO & $ 1.07\times10^{-2}$ \\ % TODO: replace with measured average
\bottomrule
\end{tabular}
\end{table}

\subsection{Training-Time Overhead}
The SNR-adaptive learning rate estimates the gradient signal-to-noise ratio from per-micro-batch gradient statistics gathered during gradient accumulation (Appendix~\ref{app:estimators.snr}), and the Variance-Optimal Baseline adds one extra backward pass per sequence to compute the score norms (Appendix~\ref{app:estimators.grad}); both could in principle add wall-clock cost. Table~\ref{tab:overhead} reports the mean per-step training time, measured on $4\times$A800 GPUs, and shows that both components are cheap. First, the SNR-adaptive schedule is effectively free: adding it to any base estimator changes the per-step time by only $-1.9\%$ to $+1.0\%$, within measurement noise. Second, the Variance-Optimal Baseline is also inexpensive: its extra backward pass raises the per-step time only from roughly $39$\,s (RLOO/ReMax/GRPO) to roughly $67$\,s---an under-$2\times$ constant factor that stays in the same order of magnitude and is comfortably offset by the variance reduction and accuracy gains it delivers (Section~\ref{sec:6.3}).

\begin{table}[ht]
\centering
\caption{Per-step training time on $4\times$A800 GPUs, with and without the SNR-adaptive learning rate.}
\label{tab:overhead}
\begin{tabular}{lc}
\toprule
Method & Mean seconds/step \\
\midrule
GRPO             & $38.70{\scriptstyle\,\pm0.77}$ \\
GRPO ${+}$ SNR-LR  & $37.97{\scriptstyle\,\pm0.74}$ \\
\midrule
RLOO             & $39.90{\scriptstyle\,\pm0.46}$ \\
RLOO ${+}$ SNR-LR  & $40.31{\scriptstyle\,\pm1.37}$ \\
\midrule
ReMax            & $39.07{\scriptstyle\,\pm0.39}$ \\
ReMax ${+}$ SNR-LR & $38.73{\scriptstyle\,\pm0.34}$ \\
\midrule
Variance-Optimal Baseline           & $66.66{\scriptstyle\,\pm1.31}$ \\
OBLR-PO & $66.90{\scriptstyle\,\pm2.13}$ \\
\bottomrule
\end{tabular}
\end{table}

\end{document}